\newtheorem{definition}{Definition}
\newtheorem{condition}{Condition}
\newtheorem{theorem}{Theorem}
\newtheorem{lemma}{Lemma}
\title{Hybrid Combinatorial Multi-armed Bandits with Probabilistically Triggered Arms}
\author{
Kongchang Zhou \\
Southern University of Science and Technology \\
\texttt{12112825@mail.sustech.edu.cn}
\and
Tingyu Zhang \\
Southern University of Science and Technology  \\
\texttt{j74638239@gmail.com}
\and
Wei Chen \\
Microsoft Research\\
\texttt{weic@microsoft.com}
\and
Fang Kong \\
Southern University of Science and Technology \\
\texttt{kongf@sustech.edu.cn}
}
\date{}
\begin{document}
\maketitle

\begin{abstract}
The problem of combinatorial multi-armed bandits with probabilistically triggered arms (CMAB-T) has been extensively studied. Prior work primarily focuses on either the online setting where an agent learns about the unknown environment through iterative interactions, or the offline setting where a policy is learned solely from logged data. However, each of these paradigms has inherent limitations: online algorithms suffer from high interaction costs and slow adaptation, while offline methods are constrained by dataset quality and lack of exploration capabilities. To address these complementary weaknesses, we propose hybrid CMAB-T, a new framework that integrates offline data with online interaction in a principled manner. Our proposed hybrid CUCB algorithm leverages offline data to guide exploration and accelerate convergence, while strategically incorporating online interactions to mitigate the insufficient coverage or distributional bias of the offline dataset. We provide theoretical guarantees on the algorithm’s regret, demonstrating that hybrid CUCB significantly outperforms purely online approaches when high-quality offline data is available, and effectively corrects the bias inherent in offline-only methods when the data is limited or misaligned. Empirical results further demonstrate the consistent advantage of our algorithm. 
\end{abstract}

\section{Introduction}

Combinatorial multi-armed bandits with probabilistically triggered arms (CMAB-T) provide a powerful framework for modeling a broad class of real-world sequential decision-making problems, including influence maximization, learning to rank, and large language model cache \citep{chen2013combinatorial,chen2016combinatorial,wang2017cmabt,wen2017online,kong2023onlineDC,liu2023contextualCMAB,liu2025offlineCMAB,pope2022scalingtransformers,zhu2023optimalcaching,gim2023promptcache,qu2024mobilellm}. In these settings, a decision-maker repeatedly selects a combinatorial action, typically a subset of base arms, and receives partial feedback governed by a probabilistic triggering process.


Most existing work on CMAB-T has focused on the \emph{online setting}, where an agent learns through trial and error by interacting with the environment over multiple rounds \citep{chen2013combinatorial,chen2016combinatorial,wang2017cmabt,wen2017online,kong2023onlineDC,liu2023contextualCMAB,liu2024combinatorialMDP}. While this approach enables adaptive learning and active exploration, it often incurs high feedback collection costs and suffers from slow convergence—particularly in large-scale or high-stakes domains. 

A study~\citep{liu2025offlineCMAB} has begun to explore the \emph{offline setting} for CMAB-T, where the goal is to learn decision policies from pre-collected data logs, thereby avoiding the expense of online interaction. However, offline learning is highly sensitive to the quality and coverage of the logged data. For example, rare but important action combinations may be missing, and distributional shifts between the offline data set and online environment can lead to suboptimal performance. Moreover, the lack of active exploration limits the learner’s ability to gather information about underexplored or high-uncertainty actions.

The limitations of purely online or offline learning motivate the study of hybrid learning methods, which use offline data to warm-start online learning \citep{shivaswamy2012multiHistory,song2023hybrid,oetomo2023cutting,agnihotri2024online,cheung2024minucb,qu2025hybrid}. These approaches balance the cost-free nature of offline data with the adaptability of online exploration, often leading to improved sample efficiency in practice.
While hybrid methods have been studied in the classical MAB problems, their extension to the general CMAB-T setting remains largely unexplored.

The technical challenge arises when incorporating the offline data into the regret analysis of the online CMAB-T. In particular, we must determine \emph{when} to rely on the pure online observation and \emph{when} the offline data (may be biased) is sufficiently reliable to be used. In the MAB setting, the regret admits a clean decomposition: it can be expressed as the sum over arms of the number of times each suboptimal arm is pulled, multiplied by its corresponding sub-optimality gap. This makes it straightforward to quantify how offline data reduces regret by decreasing the selection count of suboptimal arms \citep{cheung2024minucb}. But in our considered CMAB-T setting, such gap-based reasoning is no longer directly applicable, where the per-round regret cannot be attributed to individual arms through simple suboptimality gaps. The regret depends on the triggered arms and the combinatorial reward structure, making it much more difficult to define a universal threshold for determining when to use offline data.

To overcome these challenges, our work focuses on the following fundamental questions:

\textit{
(1) How to derive an algorithm that effectively leverages offline data in the online CMAB-T setting?
}

\textit{
(2) Can we provide the corresponding theoretical guarantees that offline data leads to measurable improvement compared with purely online algorithms?
}

We answer these questions through the following contributions: 

\textbf{Problem Formulation.} 
We formally define the \emph{hybrid CMAB-T} (H-CMAB-T) setting by extending the classical CMAB-T framework to incorporate offline data. In particular, we define the offline dataset as a collection of observations over base arms, and introduce a notion of \emph{bias} based on the discrepancy between the offline and online mean rewards of each arm. 
This formulation provides a principled basis for assessing when offline data can be beneficial to online learning.

\textbf{Algorithm Design.} 
We propose a new algorithm \emph{hybrid CUCB} leveraging the biased offline data to improve the classic CUCB algorithm. This algorithm balances offline and online feedback through a dual-UCB mechanism. Specifically, we construct two confidence bounds for each base arm: one purely based on the feedback collected online, and another that hybridizes observations from both the offline data set and online interactions with an explicit bias correction. By selecting the minimum of the two UCB estimates, the algorithm adaptively leverages the offline data based on its quality. 

\textbf{Theoretical Analysis.} 
To overcome challenge from the core difference between MAB and CMAB-T, we draw on the intuition that while the bias may appear at the level of individual arms, the regret in CMAB-T arises from actions that involve multiple arms and triggering mechanisms. Motivated by this, we explore a connection between per-arm bias and action-level regret by considering a hypothetical allocation of the regret to the arms that could be triggered in each round. This perspective allows us to bridge the arm-level discrepancy introduced by offline data and the combinatorial nature of regret in CMAB-T. Leveraging this connection, we construct a threshold condition that determines whether the offline estimates are reliable enough to be used. Finally, We provide both gap-dependent and gap-independent regret bounds. Our results show that the algorithm achieves improved regret over standard online methods~\citep{wang2017cmabt}, with a provable \emph{saving term} that depends on the informativeness and reliability of the offline data. 
Our result recovers the standard online regret when offline data is absent or adversarial, and it matches or improves upon the results of~\citet{cheung2024minucb} when the problem reduces to classical MAB. We also derive lower bound to prove its tightness in Appendix~E.

\textbf{Empirical Evaluation}
We complement our theoretical analysis with empirical evaluations. The results consistently demonstrate that hybrid CUCB outperforms both purely online and purely offline baselines, highlighting its adaptability and robustness across varying data conditions.

\section{Related Work}

\textbf{Online Bandits.}
MAB problems have been extensively studied as a foundational model for sequential decision-making under uncertainty~\citep{auer2002finite,bubeck2012regret,lattimore2020bandit}. The combinatorial multi-armed bandit (CMAB) framework~\citep{chen2013combinatorial} generalizes classical MAB by allowing the learner to select subsets of arms (super arms) in each round, leading to richer modeling power and broader applicability. In particular, the CMAB with probabilistically triggered arms (CMAB-T) framework introduced by~\citet{chen2016combinatorial,wang2017cmabt} captures the settings such as influence maximization, online learning to rank where the reward depends not only on the chosen super arm but also on a random triggering process. 
This framework has also been extended to incorporate contextual information \citep{liu2023contextualCMAB}.
A line of work has established algorithms with theoretical regret guarantees under structural assumptions such as monotonicity and bounded smoothness~\citep{chen2016combinatorial, wang2017cmabt,wen2017online,liu2022batch,liu2023contextualCMAB,liu2024combinatorialMDP}. All these approaches operate entirely in the online setting.

\textbf{Offline Bandits.}
Offline learning in bandit and reinforcement learning has gained increasing attention due to the high cost of online exploration and the availability of logged historical data. It has been explored in many bandits settings like the classical MAB~\citep{rashidinejad2021bridging}, contextual MAB~\citep{rashidinejad2021bridging,jin2021pessimism,li2022pessimism} and neural contextual bandits~\citep{nguyen2021sample,nguyen2022offline}.
For combinatorial bandits,~\citet{liu2025offlineCMAB} recently propose CLCB, the first general framework for offline learning in CMAB problems, which characterizes dataset quality through coverage conditions, and provide near-optimal theoretical guarantees. 

\textbf{Hybrid Bandits.}
To mitigate the limitations of purely online or offline learning, hybrid methods aim to combine their respective advantages by using offline data to initialize or guide online exploration.  
Hybrid learning has been studied in various domains, including bandit problems~\citep{shivaswamy2012multiHistory,oetomo2023cutting,agnihotri2024online} and reinforcement learning~\citep{song2023hybrid, qu2025hybrid}.
Most of these hybrid methods assume that offline data is unbiased and directly compatible with the online environment~\citep{shivaswamy2012multiHistory,song2023hybrid,oetomo2023cutting,agnihotri2024online}. \citet{qu2025hybrid} assume a strongly biased offline dataset with a lower bound on the discrepancy between offline and online means. \citet{cheung2024minucb} do not require such assumptions and propose an algorithm that adaptively incorporates offline data based on its reliability. To the best of our knowledge, the hybrid learning problem in CMAB-T remains open.

\section{Problem Setup}
We first introduce the \textit{ hybrid combinatorial mutlti-armed bandits  with probabilistically triggered arms} (H-CMAB-T) problem. The H-CMAB-T problem explored in this paper is built upon the standard CMAB-T framework \citep{wang2017cmabt}. We begin by reviewing the classical CMAB-T setting, and then introduce how offline data is incorporated in our extension.

The online environment consists of $m$ base arms, represented as random variables $X_1, X_2, \ldots, X_m$, jointly distributed according to an unknown distribution $D^\text{on} \in \mathcal{D}$, where $D^\text{on}$ is supported on $[0,1]^m$ and $\mathcal{D}$ is the distribution family.
For each base arm $i \in [m]$, let $\mu_i^\text{on} = \mathbb{E}_{X \sim D^\text{on}}[X_i]$ denote its expected value, and define the vector $\mu^\text{on} = (\mu_1^\text{on}, \ldots, \mu_m^\text{on}) \in [0,1]^m$ as the mean vector of all arms. Note that $\mu^\text{on}$ is determined by the underlying distribution $D^\text{on}$. The learning process unfolds over discrete rounds $t = 1, 2, \ldots, T$. In each round:

\textit{1.} The learner selects a combinatorial action $S_t \in \mathcal{S}$ based on the previous rounds observation and feedback, where $\mathcal{S}$ is a predefined action space, possibly subject to structural constraints. The combinatorial action $S_t$ is also called ``super arm'' and in many cases it is a subset of base arms. 

\textit{2.} The environment draws an independent sample $X^{(t)} = (X_1^{(t)}, \ldots, X_m^{(t)}) \sim D^{\text{on}}$. 

\textit{3.} Playing action $S_t$  in the environment induces a random subset $\tau_t \subseteq [m]$ of arms to be triggered. The triggering process is stochastic: even given the environment outcome $X^{(t)}$ and the chosen action $S_t$, the triggered set $\tau_t \subseteq [m]$ may still exhibit randomness. We model this using a \textit{probability triggering function} $D^{\text{trig}}(S, X)$, which defines a distribution over subsets of $[m]$ conditioned on action $S$ and environment realization $X$. Formally, we assume that for each round $t$, the triggered set $\tau_t$ is independently drawn from $D^{\text{trig}}(S_t, X^{(t)})$, i.e., $\tau_t \sim D^{\text{trig}}(S_t, X^{(t)})$. Moreover, to enable algorithms to estimate $\mu_i^{\text{on}}$ from observed samples during online learning, we make the following identifiability assumption: the outcome of each arm $i$ does not depend on whether it is triggered. That is, $\mathbb{E}_{X \sim D^{\text{on}}, \tau \sim D^{\text{trig}}(S, X)}[X_i \mid i \in \tau] = \mathbb{E}_{X \sim D^{\text{on}}}[X_i] = \mu_i^{\text{on}},~\forall ~i\in[m]$.

\textit{4.} A non-negative reward $R(S_t, X^{(t)}, \tau_t) \in \mathbb{R}_{\ge 0}$ is revealed to the learner, which is a deterministic function of the chosen action $S_t$, the sampled instance $ X^{(t)}$, and the triggered set $\tau_t$. The expected reward of an action $S \in \mathcal{S}$ is given by $r_S(\mu) := \mathbb{E}[R(S, X, \tau)]$, where the expectation is taken over $X \sim D$ and $\tau \sim D^{\text{trig}}(S, X)$. We emphasize that $r_S(\mu)$ is a function of $S$ and the mean vector $\mu$.


The goal of the learner is to maximize the total expected reward over $T$ rounds, i.e., to design a learning algorithm that selects $S_1, \ldots, S_T$ to maximize $\sum_{t=1}^T \mathbb{E}[R(S_t, X^{(t)}, \tau_t)]$.

While the classical CMAB-T framework captures the core structure of combinatorial bandit problems with triggering, it assumes that all learning happens online from scratch. In many practical scenarios, however, a significant amount of data is already available prior to online interaction---collected from historical logs or prior deployments. For example, in \emph{online influence maximization} problem, the organizations often have access to past propagation traces---records of how information spread---which can serve as valuable offline data to accelerate online learning in new deployment scenarios. 


Motivated by this, we consider an extension of CMAB-T that incorporates such \emph{offline data}, and investigate how it can be used to improve learning performance. More specifically, the key difference between H-CMAB-T and CMAB-T problem is that before online learning, the player is given an offline data collection $\mathcal{B}$. 
It is worth noting that there may be discrepancies between offline data and the online environment. For example, in the OIM problem, due to the characteristics of the product or shifts in user preferences, the diffusion dynamics within social networks can differ. To characterize such phenomenon and avoid misleading of offline data, we consider that the arms in the offline data set and the online setting may have different means. Specifically, 
the outcomes of $m$ base arms in the offline data set can be represented as random variables $Y_1, Y_2, \ldots, Y_m$, jointly distributed according to an unknown distribution $D^{\text{off}}$ and the mean vector of the offline data is  $\mu^{\text{off}} = (\mu^{\text{off}}_1, \ldots, \mu^{\text{off}}_m)$. It is natural that $|\mu^{\text{on}}_i-\mu^{\text{off}}_i|\ge0$, and equality holds if and only if the offline data is unbiased. Without loss of generality, we denote $N_i$ as the number of the independent observations of arm $i$. Then the offline data set can be represented as $
\mathcal{B}:=\{N_i,\{Y_{i,s}\}_{s=1}^{N_i}\}_{i=1}^{m}$. 

\textbf{Bias control.} 
Besides, to quantify this discrepancy, we adopt the bias control vector $V = (V_1, \ldots, V_m)$ as a hyper-parameter which upper bounds the difference between the offline and online means for each arm:
\[
|\mu_i^{\text{off}} - \mu_i^{\text{on}}| \leq V_i, \quad \forall i \in [m].
\]
Since both means lie in $[0,1]$, we assume $V_i \in [0,1]$ for all $i$. Smaller values of $V_i$ indicate higher alignment between offline and online environments. In settings with prior knowledge—e.g., similar user populations or stable network dynamics—we may set $V_i$ to be small. In fully agnostic cases where no such knowledge is available, we conservatively set $V_i = 1$.

\textit{Remark 1. As rigorously shown in Section 3 of \cite{cheung2024minucb}, in the presence of biased offline data, no hybrid algorithm in MAB can be guaranteed to outperform a purely online baseline unless some prior knowledge about the bias is available. This theorem highlights that incorporating some form of prior understanding of the bias is not just helpful but fundamentally necessary. To understand this challenge, one can consider the unknown $V$ setting and try to design a hybrid algorithm that learns $V$ during the online interaction. This raises a challenging trade-off: if $V$ is small, estimating it accurately may require excessive online samples, outweighing the benefit of offline data; if $V$ is large, offline estimates are often too biased to be useful, making a pure online strategy preferable. Exploring the unknown $V$ setting is valuable but technically demanding, and we leave it as an important direction for future work. } 



Consequently, based on the above problem formulation, we define an H-CMAB-T instance as a tuple \(([m], \mathcal{S}, \mathcal{D}, D^{\text{trig}}, R, \mathcal{B})\). To make the learning problem well-defined and practically solvable, it remains to specify how actions are selected given current estimates of the arm statistics.
In many CMAB-T instances, the action space is exponentially large and the underlying optimization problem of selecting the optimal super arm is NP-hard~\citep{chen2013combinatorial,chen2016combinatorial}.
To decouple the statistical estimation from the combinatorial optimization, prior works commonly assume the access to an \emph{offline oracle} that returns an approximate solution. This allows the learning algorithm to focus on estimating arm statistics while relying on the oracle to select actions.

\paragraph{Offline \texorpdfstring{$(\alpha, \beta)$}--approximation oracle $\mathcal{O}$.}
We assume access to an offline $(\alpha, \beta)$-approximation oracle, denoted by $\mathcal{O}$. This oracle takes as input the mean vector $\mu = (\mu_1, \ldots, \mu_m)$ and returns an action $S^{\mathcal{O}} \in \mathcal{S}$ such that $\mathbb{P} \left[ r_{S^{\mathcal{O}}}(\mu) \ge \alpha \cdot \operatorname{opt}_\mu \right] \ge \beta$, where $\alpha \in (0,1]$ is the approximation ratio, and $\beta \in (0,1]$ is the success probability. Here, $\operatorname{opt}_\mu$ denotes the optimal expected reward under mean vector $\mu$, defined as
\(
\operatorname{opt}_\mu := \sup_{S \in \mathcal{S}} r_S(\mu).
\)
And we assume that $\operatorname{opt}_\mu$ is bounded for all $\mu$.


Further, the objective of the learner is to minimize the $(\alpha, \beta)$--approximation regret defined as below~\citep{chen2013combinatorial, chen2016combinatorial,wang2017cmabt,wen2017online}.

\begin{definition}[{\texorpdfstring{$(\alpha, \beta)$}--approximation regret.}]
\label{def:approximation regret}
The $(\alpha, \beta)$-approximation regret of a learning algorithm $\mathcal{A}$ over $T$ rounds under an H-CMAB-T instance \(([m], \mathcal{S}, \mathcal{D}, D^{\text{trig}}, R, \mathcal{B})\) is
\begin{equation*}
    \operatorname{Reg}^{\mathcal{A}}_{\mu^{\text{on}} ,\alpha,\beta}(T) := \alpha \cdot \beta \cdot T \cdot \operatorname{opt}_{\mu^{\text{on}}}  - \mathbb{E} \left[ \sum_{t=1}^T R(S_t^{\mathcal{A}}, X^{(t)}, \tau_t) \right] = \alpha \cdot \beta \cdot T \cdot \operatorname{opt}_{\mu^{\text{on}}}   - \mathbb{E} \left[ \sum_{t=1}^T r_{S_t^{\mathcal{A}}}(\mu^{\text{on}} ) \right],
\end{equation*}
where $S_t^{\mathcal{A}}$ is the action selected by algorithm $\mathcal{A}$ at round $t$, and the expectation is taken over the randomness of the environment outcomes $\{X^{(t)}\}_{t=1}^T$, the triggered sets $\{\tau_t\}_{t=1}^T$, and the internal randomness of the algorithm.
\end{definition}

This notion of regret captures how far the cumulative reward falls short of what could be obtained by always playing a near-optimal action provided by the oracle.

We now introduce several conditions that are used to establish regret guarantees. These conditions are widely adopted in the CMAB literature \citep{chen2016combinatorial,wang2017cmabt,wen2017online,liu2023contextualCMAB,liu2025offlineCMAB}. To facilitate the presentation, we denote $p_i^{D,S}$ as the probability that arm $i$ is triggered when action $S$ is selected in environment $D$.

\begin{condition}[Monotonicity] 
\label{cond:mono}
We say that a CMAB-T problem instance satisfies monotonicity, if for any action $S \in \mathcal{S}$, for any two distributions $D, D' \in \mathcal{D}$ with expectation vectors $\boldsymbol{\mu} = (\mu_1, \ldots, \mu_m)$ and $\boldsymbol{\mu}' = (\mu_1', \ldots, \mu_m')$, we have $r_S(\boldsymbol{\mu}) \leq r_S(\boldsymbol{\mu}')$ if $\mu_i \leq \mu_i'$ for all $i \in [m]$.
\end{condition}

\begin{condition}[1-Norm TPM Bounded Smoothness]
\label{cond:TPM}
We say that a CMAB-T problem instance satisfies 1-norm TPM bounded smoothness, if there exists $B \in \mathbb{R}^+$ (referred as the bounded smoothness constant) such that, for any two distributions $D, D' \in \mathcal{D}$ with expectation vectors $\boldsymbol{\mu}$ and $\boldsymbol{\mu}'$, and any action $S$, we have
\(|r_S(\boldsymbol{\mu}) - r_S(\boldsymbol{\mu}')| \leq B \sum_{i \in [m]} p_i^{D,S} |\mu_i - \mu_i'|.  \)
\end{condition}

	

    


The two reward function conditions encode natural intuitions in the CMAB-T setting: Condition~\ref{cond:mono} reflects monotonicity—if all arm means are higher in one set than another, any action should yield a higher expected reward; Condition~\ref{cond:TPM} captures the role of triggering probabilities—arms that are triggered more often contribute more to the reward and thus require more accurate mean estimates, while less frequently triggered arms can tolerate greater uncertainty.



\section{The Hybrid CUCB Algorithm}

\begin{algorithm}
\caption{Hybrid CUCB with Computation Oracle}
\label{alg:hybrid cucb}
\begin{algorithmic}[1]
\Require Valid bias bound $V$, number of arms $m$, offline data 
$\mathcal{B}:= \{N_i, \{Y_{i,s}\}_{s=1}^{N_i} \}_{i=1}^{m}$, horizon $T$, Oracle

\For{each arm $i \in [m]$}
    \State $\hat{\mu}_i^{\text{off}} \leftarrow \frac{1}{N_i} \sum_{s=1}^{N_i} Y_{i,s}, \quad T_i \leftarrow 0$, \quad $\hat{\mu}_i^{\text{on}} \leftarrow 0$ \hfill
\EndFor

\For{$t = 1,2,\ldots,T$}
    \For{each arm $i \in [m]$}
        \State $\mathrm{rad}_t(i) \leftarrow \sqrt{ \frac{2 \log(4mt^3)}{T_i} }$ \label{alg:line:radius:online}  \hfill\Comment{$=\infty$ if $T_i = 0$}
        \State $\mathrm{rad}_t^{\mathrm{S}}(i) \leftarrow \sqrt{ \frac{2 \log(4mt^3)}{N_i + T_i} } + \frac{N_i}{N_i + T_i} V_i$  \label{alg:line:radius:hybird}\hfill\Comment{$=\infty$ if $N_i + T_i = 0$}
        \State $\mathrm{UCB}_t(i) \leftarrow \hat{\mu}_i^{\text{on}} + \mathrm{rad}_t(i)$ \label{alg:line:ucb:online}
        \State $\mathrm{UCB}_t^{\mathrm{S}}(i) \leftarrow \frac{N_i \hat{\mu}_i^{\text{off}} + T_i \hat{\mu}_i^{\text{on}}}{N_i + T_i} + \mathrm{rad}_t^{\mathrm{S}}(i)$ \label{alg:line:ucb:hybrid}
        \State $\bar{\mu}_i \leftarrow \min \left\{ \mathrm{UCB}_t(i),\, \mathrm{UCB}_t^{\mathrm{S}}(i),\, 1 \right\}$
    \EndFor

    \State $S \leftarrow \text{Oracle}(\bar{\mu}_1, \ldots, \bar{\mu}_m)$
    \State Play action $S$, triggering a set $\tau \subseteq [m]$ of base arms
    \For{each $i \in \tau$ with feedback $X_i^{(t)}$}
        \State \quad $T_i \leftarrow T_i + 1$\quad $\hat{\mu}_i^{\text{on}} \leftarrow \hat{\mu}_i^{\text{on}} + (X_i^{(t)} - \hat{\mu}_i^{\text{on}}  )/T_i$
    \EndFor
\EndFor
\end{algorithmic}
\end{algorithm}

In this section, we provide an algorithm, hybrid CUCB (Algorithm \ref{alg:hybrid cucb}), aiming to leverage \emph{useful} offline data to accelerate the online learning efficiency. 
The hybrid CUCB algorithm runs as follows. In each round, the algorithm computes two UCB vectors:
\[
\mathrm{UCB}_t = (\mathrm{UCB}_t(1), \ldots, \mathrm{UCB}_t(m)), \quad
\mathrm{UCB}^\mathrm{S}_t = (\mathrm{UCB}^\mathrm{S}_t(1), \ldots, \mathrm{UCB}^\mathrm{S}_t(m)),
\]
and then feeds the coordinate-wise minimum two of them 
into the $(\alpha, \beta)$-approximation oracle to select an action.

The vector $\mathrm{UCB}_t$ follows the standard CUCB construction~\citep{wang2017cmabt} (Line \ref{alg:line:radius:online} and \ref{alg:line:ucb:online}), representing the conventional UCB established with the pure online feedback, where $T_i$ denotes the number of times that arm $i$ has been triggered. 

As to H-CMAB-T problem, to effectively leverage offline data while remaining robust to distributional mismatch, we design a hybrid confidence bound $\mathrm{UCB}^\mathrm{S}_t$ that adaptively incorporates offline observations. Intuitively, when the offline mean of an arm is close to its online counterpart, the offline data should be more trusted. Conversely, if the discrepancy between the two is large, the algorithm should rely primarily on online feedback. 


Based on this intuition, we construct $\mathrm{UCB}^\mathrm{S}_t(i)$ using a weighted empirical mean and a bias-adjusted confidence radius (Line \ref{alg:line:radius:hybird} and \ref{alg:line:ucb:hybrid}). The empirical mean aggregates offline and online samples proportionally to their counts, while the confidence radius consists of two components: a standard deviation term based on the total offline and online sample size $N_i + T_i$, and a bias penalty scaled by the discrepancy bound $V_i$. The weight $N_i / (N_i + T_i)$ ensures that the penalty becomes more prominent as more offline data is used.

Finally, by taking the minimum between the two UCB estimates, the algorithm can exploit useful offline data. Intuitively, if $N_i$ is large and $V_i$ is small such that $\mathrm{UCB}^S_t(i) < \mathrm{UCB}_t(i)$, then the offline data is useful for online exploration and the algorithm utilizes the hybrid $\mathrm{UCB}^S_t(i)$. Otherwise, if $\mu_i^{\text{off}}$ and $\mu_i^{\text{on}}$ are far apart, then $\mathrm{UCB}^S_t(i)$ becomes large. The algorithm would default to $\mathrm{UCB}_t(i)$, effectively ignoring offline data. 
In both cases, the selection rule ensures that the decision is made conservatively, based on the estimated trustworthiness of the offline data. We next provide the regret upper bound for Algorithm \ref{alg:hybrid cucb} in Section \ref{sec:results}.

\section{Theoretical Analysis}\label{sec:results}

In this section, we provide the theoretical results for hybrid CUCB. We first provide the gap-dependent regret upper bound and the corresponding discussions. The gap-independent regret analysis comes later. The complete proof is provided in the appendix.

\subsection{Gap-Dependent Bound}\label{subsec:Gap-Dependent Bound}

We first define the reward gaps used in the regret analysis.

\begin{definition}[Gap~\citep{wang2017cmabt}]
\label{def:gap}
Fix a distribution $D$ and its expectation vector $\boldsymbol{\mu}$. For each action $S$, we define the gap $\Delta_S = \max(0, \alpha \cdot \text{opt}_{\boldsymbol{\mu}} - r_S(\boldsymbol{\mu}))$. 
For each arm $i$, we define
\[
\Delta_{\min}^i = \inf_{S \in \mathcal{S} : p_i^{D,S} > 0, \Delta_S > 0} \Delta_S, \quad 
\Delta_{\max}^i = \sup_{S \in \mathcal{S} : p_i^{D,S} > 0, \Delta_S > 0} \Delta_S.
\]
As a convention, if there is no action $S$ such that $p_i^{D,S} > 0$ and $\Delta_S > 0$, we define 
$\Delta_{\min}^i = +\infty$, $\Delta_{\max}^i = 0$. Further define $\Delta_{\min} = \min_{i \in [m]} \Delta_{\min}^i,~\Delta_{\max} = \max_{i \in [m]} \Delta_{\max}^i$. 
\end{definition}

Let $\tilde{S} = \{i \in [m] \mid p_i^{\boldsymbol{\mu},S} > 0\}$ be the set of arms that could be triggered by $S$. Let $K = \max_{S \in \mathcal{S}} |\tilde{S}|$. 
To formally capture the influence of the discrepancy between offline and online environment, we introduce a measure $\omega_i:= V_i+\mu_i^\text{off}-\mu_i^\text{on},~i\in[m]$. 
By the definition of $V$, we have that $\omega_i \in [0, 2V_i]$. Intuitively, the quantity $\omega_i$ allows us to express how much the offline data for arm $i$ deviates from the true online behavior, and plays a key role in determining the extent to which the offline data influences the online learning. 

\begin{theorem}[Gap-Dependent Regret Bound] 
\label{thm:regret-bound}
For an H-CMAB-T problem \(([m], \mathcal{S}, \mathcal{D}, D^{\text{trig}}, R, \mathcal{B})\) that satisfies monotonicity (Condition~1) and TPM bounded smoothness (Condition~2), the hybrid CUCB algorithm with an input bias control vector $V$ and an $(\alpha, \beta)$-approximation oracle achieves an $(\alpha, \beta)$-approximate gap-dependent regret bounded by:
\begin{equation}
    \operatorname{Reg}_{\mu^\text{on},\alpha,\beta}(T) \leq \sum_{i \in [m]} \max\left\{ 
    \frac{64 \sqrt{2} B^2 K \log(4mT^3)}{\Delta_{\min}^i} - 
    8B \sqrt{2N_i' \log(4mT^3)}, 0 
    \right\} + 4Bm + \frac{\pi^2}{6}\Delta_{\max}, \label{eq:regret}
\end{equation}
where 
\[
N_i' = N_i \cdot \max\left\{ 1 - \frac{2BK \omega_i}{\Delta_{\min}^i}, 0 \right\}^2.
\]
\end{theorem}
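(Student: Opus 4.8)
The plan is to follow the standard CUCB regret decomposition of \citet{wang2017cmabt}, but to carefully track how the hybrid confidence bound $\mathrm{UCB}^\mathrm{S}_t$ tightens the effective number of ``bad'' rounds attributable to each arm. First I would define a clean event under which all the confidence bounds simultaneously hold: namely that $|\hat\mu_i^{\text{on}} - \mu_i^{\text{on}}| \le \mathrm{rad}_t(i)$ for the pure online estimator, and that the hybrid estimator satisfies $\tfrac{N_i\hat\mu_i^{\text{off}} + T_i\hat\mu_i^{\text{on}}}{N_i+T_i} + \mathrm{rad}_t^{\mathrm{S}}(i) \ge \mu_i^{\text{on}}$. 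The key calculation here is to verify that the bias penalty $\tfrac{N_i}{N_i+T_i}V_i$ in $\mathrm{rad}_t^{\mathrm{S}}(i)$ correctly absorbs the offline bias: the deviation of the weighted mean from $\mu_i^{\text{on}}$ splits into a concentration part (controlled by $\sqrt{2\log(4mt^3)/(N_i+T_i)}$ via a Hoeffding bound over $N_i + T_i$ independent samples) and a systematic bias part $\tfrac{N_i}{N_i+T_i}|\mu_i^{\text{off}}-\mu_i^{\text{on}}| \le \tfrac{N_i}{N_i+T_i}V_i$. A union bound over arms and rounds then gives failure probability summing to at most the $\tfrac{\pi^2}{6}\Delta_{\max}$-type tail, using $t^3$ in the log.

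On the good event, monotonicity (Condition~1) and the oracle guarantee ensure that whenever $\Delta_{S_t}>0$, the gap is controlled by the estimation error on the triggered arms via TPM smoothness (Condition~2): $\Delta_{S_t} \le 2B\sum_{i\in\tilde S_t} p_i^{\mu,S_t}\,\overline{\mathrm{rad}}_t(i)$, where $\overline{\mathrm{rad}}_t(i) = \min\{\mathrm{rad}_t(i),\mathrm{rad}_t^{\mathrm{S}}(i)\}$ is the radius of whichever UCB was selected by the coordinate-wise minimum. The crucial step, and where the hybrid analysis departs from the purely online one, is handling the effective radius: when the offline data is reliable, the selected radius is $\mathrm{rad}_t^{\mathrm{S}}(i)$, which behaves like $\sqrt{2\log(4mt^3)/(N_i+T_i)}$ — i.e., the offline count $N_i$ effectively shifts $T_i$ upward by $N_i$ (discounted by the bias). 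I would then invoke the triggering-probability-modulated counting argument (the ``reverse amortization'' over $T_i$ as in \citet{wang2017cmabt}) to convert the per-round sum into a per-arm sum of the form $\sum_i \tfrac{c\,B^2 K\log(4mT^3)}{\Delta_{\min}^i}$.

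The genuinely new component is translating the arm-level bias $\omega_i = V_i + \mu_i^{\text{off}} - \mu_i^{\text{on}}$ into the saving term $8B\sqrt{2N_i'\log(4mT^3)}$. The intuition flagged in the introduction is the ``hypothetical allocation'' of action-level regret to triggerable arms, giving a threshold: offline data for arm $i$ is worth using only up to the point where its bias penalty $\tfrac{N_i}{N_i+T_i}V_i$ (together with the true offline-online shift) remains small relative to the per-arm gap $\Delta_{\min}^i/(2BK)$. Concretely, I would argue that the hybrid bound keeps the selected radius below the online radius for all rounds until $T_i$ exceeds a threshold, and that the offline contribution effectively reduces the number of online pulls needed by an amount governed by $N_i' = N_i\max\{1 - \tfrac{2BK\omega_i}{\Delta_{\min}^i},0\}^2$. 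The factor $\max\{1 - 2BK\omega_i/\Delta_{\min}^i,0\}^2$ arises because once the accumulated bias $\tfrac{N_i}{N_i+T_i}\omega_i$ overwhelms the gap-induced tolerance $\tfrac{\Delta_{\min}^i}{2BK}$, the offline samples stop helping; squaring comes from inverting the $\sqrt{\cdot}$ in the radius to count rounds. The main obstacle will be making this threshold argument rigorous in the combinatorial triggering setting — unlike classical MAB, the regret cannot be attributed to a single arm, so I must carefully bound the contribution of each triggered arm simultaneously and ensure the $\max\{\cdot,0\}$ truncation (in both the saving term and inside $N_i'$) is handled consistently so that negative ``savings'' never inflate the bound. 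I would finish by collecting the leftover low-order terms ($4Bm$ from the initialization rounds where $T_i=0$ and radii are infinite, and $\tfrac{\pi^2}{6}\Delta_{\max}$ from the bad-event tail) to match \eqref{eq:regret}.
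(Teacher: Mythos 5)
Your proposal follows the same overall architecture as the paper's proof: the same good event (Lemma~\ref{lemma:good event}, with the hybrid deviation split into a Hoeffding term over $N_i+T_{i,t-1}$ samples plus a bias term), the same monotonicity/TPM-based per-round bound, the same gap tolerance $\Delta_{\min}^i/(2BK)$ governing when offline data stops helping, the same truncated effective count $N_i'$, and the same bookkeeping of the $4Bm$ and $\frac{\pi^2}{6}\Delta_{\max}$ terms. The one genuine divergence is the counting device: you invoke the triggering-probability-group/reverse-amortization argument of \citet{wang2017cmabt}, whereas the paper uses the TPE trick of \citet{liu2023contextualCMAB} --- since $S_t$ and $\bar\mu_{i,t}$ are $\mathcal{F}_{t-1}$-measurable, one writes $p_i^{D,S_t}=\mathbb{E}_t[\mathbb{I}\{i\in\tau_t\}]$ and, by the tower rule, converts the weighted sum over triggerable arms into a sum over actually triggered arms; each arm's contributions then re-index by its observation count $s$, and a single integral comparison $\int_0^{\ell}(N_i'+s)^{-1/2}\,ds$ with threshold $\ell=64B^2K^2\log(4mT^3)/M_i^2-N_i'$ produces the leading term and the $-8B\sqrt{2N_i'\log(4mT^3)}$ saving in one stroke. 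Your route is workable in principle and would yield a bound of the same shape, but the geometric grouping in the reverse-amortization machinery inflates constants (Wang--Chen's own leading constant is considerably larger than $64\sqrt{2}$), so followed literally it would not establish inequality \eqref{eq:regret} with the stated constants.

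Two precision issues need repair. First, your displayed inequality $\Delta_{S_t}\le 2B\sum_{i\in\tilde S_t}p_i^{\mu,S_t}\,\overline{\mathrm{rad}}_t(i)$ cannot by itself give a gap-dependent bound: summing radii over all rounds only yields the $\sqrt{T}$-type bound. The paper instead uses the tolerance-subtracted form (equation (11) in Appendix B.3 of \citet{wang2017cmabt}), $\Delta_{S_t}\le 2B\sum_{i\in\tilde S_t}\left[p_i^{D,S_t}(\bar\mu_{i,t}-\mu_i)-\frac{M_i}{2BK}\right]$, obtained by combining the TPM bound with $\Delta_{S_t}\ge M_{S_t}\ge\frac{1}{K}\sum_{i\in\tilde S_t}M_i$; it is exactly this negative term that makes a round's contribution vanish once either radius drops below the tolerance, which is where your finite threshold (and hence the case analysis on $T_{i,t-1}$ versus $\ell$) actually comes from. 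Second, bounding the hybrid error by $2\,\mathrm{rad}^{\mathrm{S}}_t(i)$, whose bias penalty involves $V_i$, would give a utilization factor with $2V_i$ in place of $\omega_i$, which is strictly weaker than the theorem. To get the stated $N_i'$ you must use the one-sided good-event bound $\bar\mu_{i,t}-\mu_i\le 2\sqrt{2\log(4mT^3)/(N_i+T_{i,t-1})}+\frac{N_i}{N_i+T_{i,t-1}}\omega_i$, where the offline bias enters with its sign; you gesture at this with $\omega_i$ in your last paragraph, but your clean event and $\overline{\mathrm{rad}}_t(i)$ are stated in terms of $V_i$ and $|\mu_i^{\text{off}}-\mu_i^{\text{on}}|$, which does not suffice.
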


Following Theorem~\ref{thm:regret-bound}, we now provide a detailed interpretation of the regret bound and its implications for how offline data is used by our algorithm. 

A key quantity in the bound is $N_i'$, which represents the amount of \emph{effectively utilized} offline data for arm $i$. 
The multiplicative factor can be interpreted as the \emph{utilization rate} of the offline data. For a fixed online learning setting, the term ${2BK}/{\Delta_{\min}^i}$ is constant, so the utilization rate increases as the discrepancy $\omega_i$ decreases. When the offline data is unbiased (i.e., $V_i = \omega_i = 0$), we have full utilization: $N_i' = N_i$. In contrast, when $\omega_i \ge \Delta_{\min}^i / (2BK)$, the utilization rate drops to zero, and the offline data is effectively ignored. This reflects our design intuition: offline data that closely matches the online environment should be trusted more and used more aggressively.
The result of Theorem 1 recovers the result of CMAB-T \citep{wang2017cmabt} as a special case when $N_i' = 0$ for all $i$. The setting may correspond to the case where the offline data do not exist (i.e. $N_i=0$ for all $i\in[m]$) or the case that the offline data is fully misaligned with the online environment. 

In general, our regret bound takes the form of the traditional regret in a purely online setting plus a benefit term of order $O(-\sqrt{N_i'})$. One might wonder why the adjustment is of order $O(-\sqrt{N_i'})$ instead of $O(-N_i')$ in \citet{cheung2024minucb}, which subtracts a term proportional to the effective number of plays, roughly $N'_i$, times the per-play regret.
This difference arises from the distinct analytical techniques used in the MAB and CMAB-T settings. 
In MAB, the regret can be directly decomposed by counting the number of times each sub-optimal arm is selected. Thus, the benefit from offline data is proportional to the number of these selections avoided.
 In contrast, the CMAB-T analysis—enabled by the monotonicity and TPM condition—bounds the regret by analyzing the discrepancy between the UCB estimates and the true mean rewards. 
 Intuitively, $O(-\sqrt{N'_i})$ comes from the regret saved in this discrepancy. With the offline data, we can interpret the online learning process as beginning from the $(N'_i+1)$-th observation for each arm $i$. The resulting saving in the discrepancy between the UCB estimates and the true mean rewards is approximately $\sum_{s=1}^{N_i'}\sqrt{\log (4mT^3)/s} = O(\sqrt{N'_i \log (4mT^3)})$. 
When $N'_i$ is larger than $64 B^2K^2 \log (4mT^3)/(\Delta_{\min}^i)^2$ , the regret incurred during the online phase becomes bounded by a constant independent of $T$. 
This aligns with the same observations in the reduced MAB setting discussed in \citet{cheung2024minucb}.

\subsection{Gap-Independent Bound}
We then analyze the gap-independent regret upper bound. We obtain two candidate bounds, denoted as $\psi$ and $\gamma$, each derived from a different proof technique. The final regret bound takes the minimum among them. 

\begin{theorem}[Gap-Independent Regret Bound] \label{thm:gap-indep}
For an H-CMAB-T problem \(([m], \mathcal{S}, \mathcal{D}, D^{\text{trig}}, R, \mathcal{B})\) that satisfies monotonicity (Condition~1) and TPM bounded smoothness (Condition~2), the hybrid CUCB algorithm with an input bias control vector $V$ and an $(\alpha, \beta)$-approximation oracle achieves an $(\alpha, \beta)$-approximate gap-independent regret bounded by:
\begin{equation}
    \operatorname{Reg}_{\mu^\text{on},\alpha,\beta}(T) \leq \min\{  \psi, \gamma \} +4Bm + \frac{\pi^2}{6}\Delta_{\max} \,,\label{eq:gap-independent regret}
\end{equation}
where $\psi$ and $\gamma$ are two candidate bounds derived via distinct proof techniques:
\begin{equation} \label{eq:psi}
\psi = 8\sqrt{2}B \sqrt{\log (4mT^3)} \left( \sum_{i \in [m]} \max \left\{ \sqrt{\frac{KT}{m}} - \sqrt{N_i''}, 0 \right\} + \sqrt{mKT} \right) \,,
\end{equation}


\begin{equation} \label{eq:gamma}
\gamma = 16BKT\sqrt{\frac{2\log(4mT^3)}{\tau_*}}+BKT\omega_{\max} \,.
\end{equation}
Here
\begin{equation} \label{eq:N_i''}
    N_i'' = N_i \cdot \max\left\{ 1 - \frac{\omega_i}{4\sqrt{2}}\sqrt{\frac{KT}{m \log(4mT^3)}}, 0 \right\}^2\,,\quad \omega_{\max}=\max_i\omega_i\,,
\end{equation}
and $\tau_*$ is defined via
\[
\begin{aligned}
& \max_{\tau,\, n} \quad \tau \\
\text{s.t.} \quad & \tau \leq N_i + n(i) \text{ where } \tau \in \mathbb{N}, n(i)  \in \mathbb{N}, \forall i \,,\\
& \sum_{i \in [m]} n(i) \leq KT\,.\\
\end{aligned}
\]
\end{theorem}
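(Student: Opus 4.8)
The plan is to follow the standard CMAB-T regret decomposition of \citet{wang2017cmabt}, adapted to the two confidence bounds and the offline bias, and then to split into the two accounting schemes that produce $\psi$ and $\gamma$. First I would fix the good event $\mathcal{E}$ on which, for every round $t$ and arm $i$, both empirical estimates concentrate: $|\hat\mu_i^{\text{on}} - \mu_i^{\text{on}}| \le \mathrm{rad}_t(i)$ and $\big| \frac{N_i\hat\mu_i^{\text{off}} + T_i\hat\mu_i^{\text{on}}}{N_i+T_i} - \mu_i^{\text{on}} \big| \le \mathrm{rad}_t^{\mathrm S}(i)$. Both are consequences of Hoeffding's inequality together with the bias bound $|\mu_i^{\text{off}} - \mu_i^{\text{on}}| \le V_i$ (which controls the systematic error of the weighted mean by $\frac{N_i}{N_i+T_i}V_i$) and a union bound; the $\log(4mt^3)$ scaling makes the failure probability summable over $t$ and $i$, so $\mathcal{E}^c$ contributes only the lower-order $\frac{\pi^2}{6}\Delta_{\max}$. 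On $\mathcal{E}$ both UCBs dominate $\mu_i^{\text{on}}$, hence $\bar\mu_i \ge \mu_i^{\text{on}}$, and the key refinement is that the one-sided overestimate of the hybrid bound is governed by the signed quantity $\omega_i = V_i + \mu_i^{\text{off}} - \mu_i^{\text{on}}$ rather than $2V_i$, yielding the per-arm bound
\[
\bar\mu_i - \mu_i^{\text{on}} \le \min\left\{ 2\sqrt{\frac{2\log(4mt^3)}{T_i}},\; 2\sqrt{\frac{2\log(4mt^3)}{N_i+T_i}} + \frac{N_i\omega_i}{N_i+T_i} \right\}.
\]

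Next I would invoke monotonicity and the oracle to write $r_{S_t}(\bar\mu) \ge \alpha\,\operatorname{opt}_{\bar\mu} \ge \alpha\,\operatorname{opt}_{\mu^{\text{on}}}$, so the per-round approximation regret is at most $r_{S_t}(\bar\mu) - r_{S_t}(\mu^{\text{on}})$, which the TPM smoothness condition bounds by $B\sum_{i\in\tilde S_t} p_i^{\mu^{\text{on}},S_t}(\bar\mu_i - \mu_i^{\text{on}})$. The decisive reduction is the triggering-probability-modulated accounting: because $\bar\mu_i - \mu_i^{\text{on}}$ is measurable with respect to the history $\mathcal{H}_t$ before round $t$'s triggering, we have $\mathbb{E}[\mathbf{1}\{i\in\tau_t\}(\bar\mu_i - \mu_i^{\text{on}})\mid\mathcal{H}_t] = p_i^{\mu^{\text{on}},S_t}(\bar\mu_i - \mu_i^{\text{on}})$, so the expected regret attributable to arm $i$ equals $\mathbb{E}\big[\sum_{s=0}^{T_{i,T}-1}\lambda_i(s)\big]$, where $\lambda_i(s)$ is the displayed deviation evaluated at online count $s$ and $T_{i,T}$ is the final number of triggers of $i$. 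This step converts triggering probabilities into actual observation counts without any dependence on a minimum triggering probability; the boundary terms from the first trigger of each arm (radius capped at $1$) are collected into the additive $4Bm$.

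From this common point the two candidate bounds follow by different accounting. For $\psi$ I integrate $\lambda_i(s)$ over the trigger sequence: the statistical part telescopes via $\sum_{s=0}^{T_{i,T}-1}(N_i+s)^{-1/2} \le 2(\sqrt{N_i+T_{i,T}} - \sqrt{N_i})$, and the additive bias penalty $\frac{N_i\omega_i}{N_i+s}$ is charged against this statistical saving, which down-weights the offline contribution from $N_i$ to the effective count $N_i''$ — the discount factor being exactly the threshold at which, at the worst-case per-arm scale $s \approx KT/m$, the bias outweighs the statistical value of the offline samples (so that $N_i''=N_i$ when unbiased and $N_i''=0$ once $\omega_i$ is too large). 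Maximizing $\sum_i$ of the resulting per-arm bound under the budget $\sum_i T_{i,T} \le KT$ (each round triggers at most $K$ arms) via concavity and Cauchy--Schwarz then produces the $\sqrt{KT/m}$ per-arm terms together with the residual $\sqrt{mKT}$. For $\gamma$, which is designed for the data-abundant regime, I would avoid the concavity step: the bias cost is bounded uniformly, using $\frac{N_i}{N_i+T_i}\le 1$ and $\sum_{i\in\tilde S_t} p_i^{\mu^{\text{on}},S_t} \le K$, by $BKT\omega_{\max}$, while the statistical cost is controlled through the linear program defining $\tau_*$, which certifies the largest common effective sample level reachable within the $KT$-trigger budget and yields $16BKT\sqrt{2\log(4mT^3)/\tau_*}$. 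Reporting the minimum of the two bounds gives the claimed gap-independent estimate.

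The main obstacle is the charging argument that turns the additive bias penalty into the clean effective-sample quantity $N_i''$ (and, for $\gamma$, into $\tau_*$). In the MAB analysis of \citet{cheung2024minucb} the regret decomposes arm-by-arm into pull counts, so a biased arm's offline data is simply discarded once its bias exceeds a fixed threshold; here, by contrast, each arm contributes an integral of a \emph{combined} statistical-plus-bias radius over a random trigger sequence, and the two pieces decay at different rates (the $\omega_i$-penalty like $1/s$ versus the statistical $1/\sqrt{s}$). Reconciling them into a single $\sqrt{N_i''}$-type saving, and doing so at the correct scale $KT/m$ dictated by the worst-case trigger allocation, is the delicate part — and it is precisely why two complementary bounds are kept: the allocation-based $\psi$ is sharp when offline data is scarce or moderately biased, whereas the water-filling $\gamma$ is sharp when offline data is abundant, and neither dominates across all regimes.
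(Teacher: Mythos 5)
Your overall skeleton (the concentration event with the signed-bias term $N_i\omega_i/(N_i+T_i)$, monotonicity plus the oracle, TPM smoothness, and the TPE step converting triggering probabilities into realized trigger counts) matches the paper, and your derivation of $\gamma$ is essentially the paper's own: bound $\bar\mu_{i,t}-\mu_i^{\text{on}}$ by the hybrid radius alone, pay $BKT\omega_{\max}$ for the bias using $N_i/(N_i+T_i)\le 1$ and the budget of at most $K$ triggers per round, and control the statistical part through the covering LP whose optimum is $\tau_*$ (the paper additionally needs two small lemmas tying the integer program over realized trigger counts to the LP solution, namely $n_*(i)=\max\{\tau_*-N_i,0\}$ and $C_T^*(i)\le\max\{\lceil\tau_*\rceil-N_i,0\}$, which your water-filling description implicitly assumes but does not prove).

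The genuine gap is in $\psi$. The paper does not obtain $\psi$ by integrating the hybrid radius over the full trigger sequence and then optimizing the allocation; it obtains $\psi$ by re-running the \emph{gap-dependent} analysis with a uniform surrogate gap $M_i=M=\sqrt{64\sqrt{2}\,mB^2K\log(4mT^3)/T}$. The key device is Wang--Chen's triggering-probability-modulated decomposition, which subtracts $M_i/(2BK)$ from each triggered arm's deviation; this makes each arm's contribution truncate (the $\kappa$ function vanishes) once $T_i$ exceeds the threshold $\ell_{T,N_i,\omega_i}(M)=64B^2K^2\log(4mT^3)/M^2-N_i''$, while rounds with $\Delta_{S_t}<M_{S_t}$ cost at most $TM$ in total. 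The discount factor in $N_i''$ is literally $\max\{1-2BK\omega_i/M,0\}$, i.e.\ it arises from comparing $\omega_i$ to the surrogate threshold, not from any charging argument. Your sketched route cannot produce this form: without the $-M_i/(2BK)$ subtraction the per-arm sums never truncate, the cumulative bias penalty is $\sum_{s<C_i}N_i\omega_i/(N_i+s)\approx N_i\omega_i\ln(1+C_i/N_i)$, which grows without bound in $C_i$ and decays like $1/s$ rather than $1/\sqrt{s}$, so it cannot be absorbed into a $-\sqrt{N_i''}$ saving by direct comparison; moreover, the budget-constrained maximization of the concave per-arm integrals that you invoke equalizes $N_i+C_i$ across arms, i.e.\ it reproduces the water-filling bound $\gamma$, not the per-arm $\sqrt{KT/m}$ caps appearing in $\psi$. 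The ``delicate part'' you flag is exactly the missing proof, and the paper's resolution is the surrogate-threshold trick rather than a reconciliation of the two decay rates.
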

These two upper bounds capture different aspects of how offline data can reduce exploration cost in the H-CMAB-T setting. We will interpret each bound, compare their relative strengths, and highlight how they recover or generalize existing results in the literature as follows.

Formally, the first bound $\psi$ involves the quantity $N_i''$, defined analogously to $N_i'$ in the gap-dependent bound, and it is interpreted as the amount of \emph{effectively used} offline data. Similarly, the quantity $N_i''$ embodies the guiding principle behind our algorithmic design in Section~\ref{subsec:Gap-Dependent Bound}: the more aligned the offline data is with the online environment, the more confidently and extensively it can be incorporated into the learning process.
The setting where $N_i'' = 0$ for all $i$ recovers the pure online CMAB-T problem in~\citep{wang2017cmabt}, and the resulting bound matches their gap-independent result in order. In this sense, $\psi$ generalizes their analysis by quantifying the potential reduction in regret due to informative offline data via an $O(-\sqrt{N_i''})$ saving term.
Moreover, it is worth noting that the use of the $\max\{\cdot, 0\}$ operator implies that $\psi$ ranges between a best-case value (when $N_i''$ is so large that the $\max\{\cdot, 0\}=0, ~\forall ~i$ ) and a worst-case value (when $N_i''=0,~ \forall ~i$) matching the pure online regret bound. Specifically, $\psi$ lies between $8B\sqrt{mKT \log (4mT^3)}$ and $16B\sqrt{mKT \log (4mT^3)}$, depending on the informativeness of the offline data. Therefore, although $\psi$ reflects meaningful offline benefits and can cut down half of the regret at the best case, it does not improve the regret order corresponding to the specific problem parameters.

The second bound, $\gamma$, is derived via a relaxation of exploration constraints into a covering linear program. The LP solution $\tau_*$ appearing in $\gamma$ satisfies a uniform lower bound $\tau_* \ge KT/m$, which ensures that the first term in $\gamma$ is always at most the worst case of $\psi$. It can still be smaller when $N_i$ is large and $w_{\max}$ is small. In some extreme cases where $w_{\max} \le 1/BKT$ and $N_i \ge (BKT)^2 \log (4mT^3)$ , the bound $\gamma$ tends to be of constant order which is independent of $T$, highlighting the potential for offline data to fully eliminate exploration cost under perfect alignment. 
Moreover, $\gamma$ structurally aligns with recent work on leveraging offline data in the classical MAB setting~\citep{cheung2024minucb}. By setting $K = B = 1$, our H-CMAB-T problem reduces to a hybrid MAB scenario. In this special case, $\gamma$ recovers (and slightly tightens) \citet{cheung2024minucb}: their bound includes a saving term of the form $2TV_{\max}$, whereas ours uses $Tw_{\max}$ with $w_{\max} \le 2V_{\max}$.


We now compare the two bounds in terms of tightness and interpretability. The bound $\psi$ provides a uniform guarantee and reflects a conservative lower baseline. While it never diverges, it also does not yield a tighter rate even when offline data is abundant.
In contrast, $\gamma$ can become substantially tighter in favorable regimes. When the offline data is highly informative (i.e., large $N_i$ and small $\omega_i$), $\gamma$ can reduce the regret significantly. For example, in the ideal case of $N_i \ge (BKT)^2 \log (4mT^3)$ and $\omega_{\max} \leq 1/BKT$, the bound tends to be a constant, matching our expectation that regret should vanish when offline information fully resolves arm uncertainty.


Together, these two bounds form a comprehensive characterization of the gap-independent regret in H-CMAB-T. They offer different trade-offs between robustness, interpretability, and tightness, and demonstrate how the size, bias, and coverage of offline data influence the learning performance

\section{Experiments}\label{exp}
In this section, we compare our proposed hybrid CUCB with existing CUCB for the pure online setting \citep{wang2017cmabt} and CLCB for the pure offline setting \citep{liu2025offlineCMAB}. To evaluate the performance of CLCB, we first use this algorithm to select an action based on the offline data set and always select this action in the following rounds. 
 For simplicity, we assume that $N_i=N$ and $V_i=V$ for any arm $i$.
Due to the space limit, more details about the reward function and triggering mechanism, as well as the experimental setting and real-world validations, are deferred to appendix.


We evaluate on unbiased offline datasets with varying sizes $N \in \{10,50,200\}$. 
As shown in Figure~\ref{fig:exp1}, hybrid CUCB consistently outperforms both 
online CUCB and offline CLCB. The improvement stems from the warm-start provided by 
offline data, which reduces early exploration. The advantage becomes more pronounced 
with larger $N$, and when $N$ is sufficiently large (e.g., $N=200$), hybrid CUCB achieves 
constant regret. Compared to CLCB, the hybrid approach is especially superior when 
offline data is scarce, since CLCB relies solely on potentially inaccurate offline estimates.

\begin{figure}[h]
    \centering
   \includegraphics[width=0.32\textwidth]{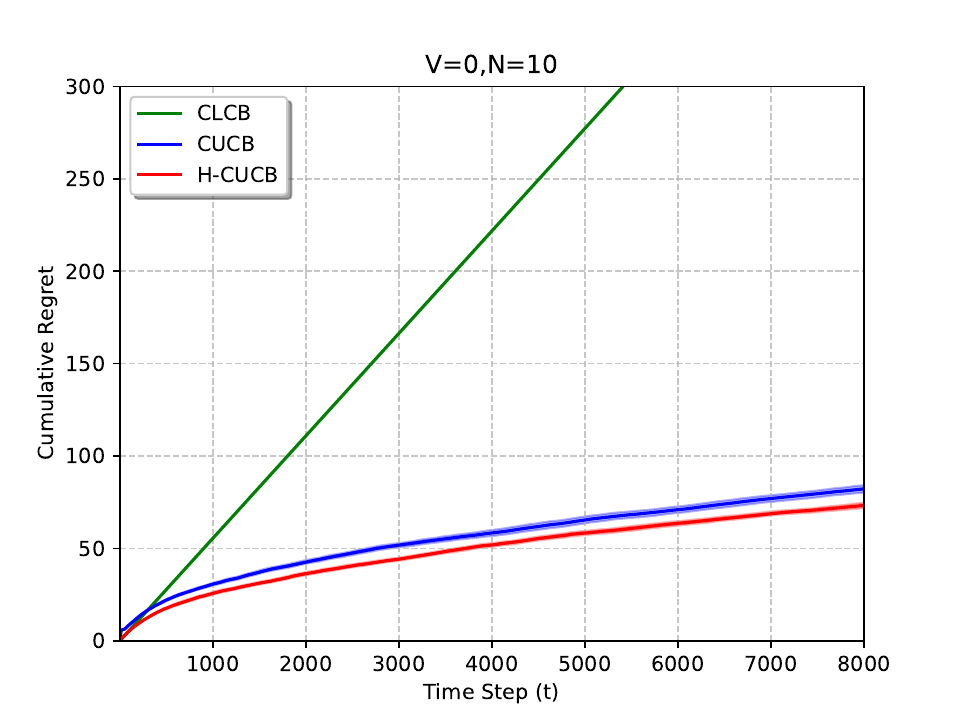}
   \includegraphics[width=0.32\textwidth]{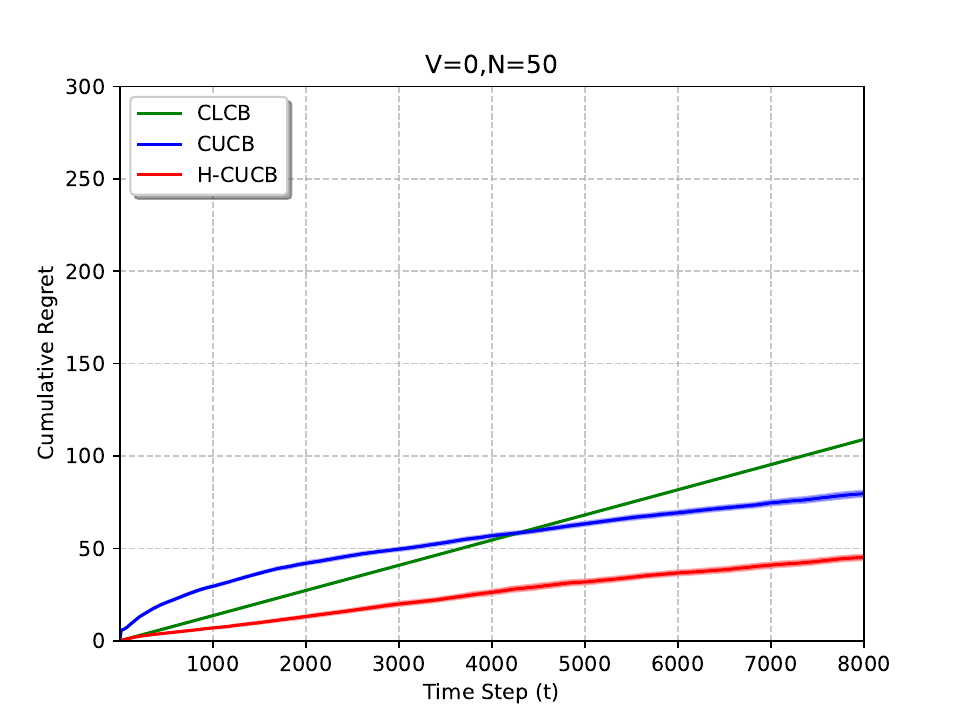}
   \includegraphics[width=0.32\textwidth]{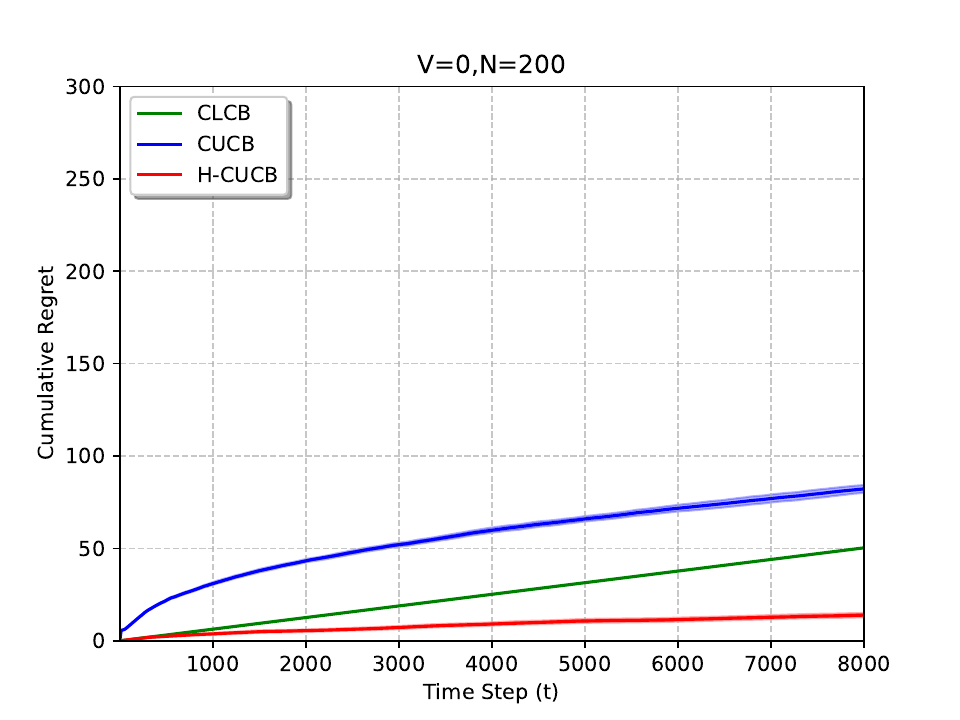}
    \caption{Performance comparison of hybrid CUCB against baselines with unbiased offline data set. }
    \label{fig:exp1}
\end{figure}


We further evaluate the robustness of the algorithms under distributional bias between the offline and online environments. Specifically, we consider varying levels of bias $V\in\{0.2,0.3,0.4\}$, assuming a sufficiently large offline dataset size ($N=200$) to ensure reliable offline estimates. The results, presented in Figure \ref{fig:exp_bias}, demonstrate that our hybrid CUCB algorithm consistently outperforms or matches the baseline performance across all tested levels of distributional bias.

\begin{figure}[h]
    \centering
\includegraphics[width=0.32\textwidth,height=0.25\textwidth]{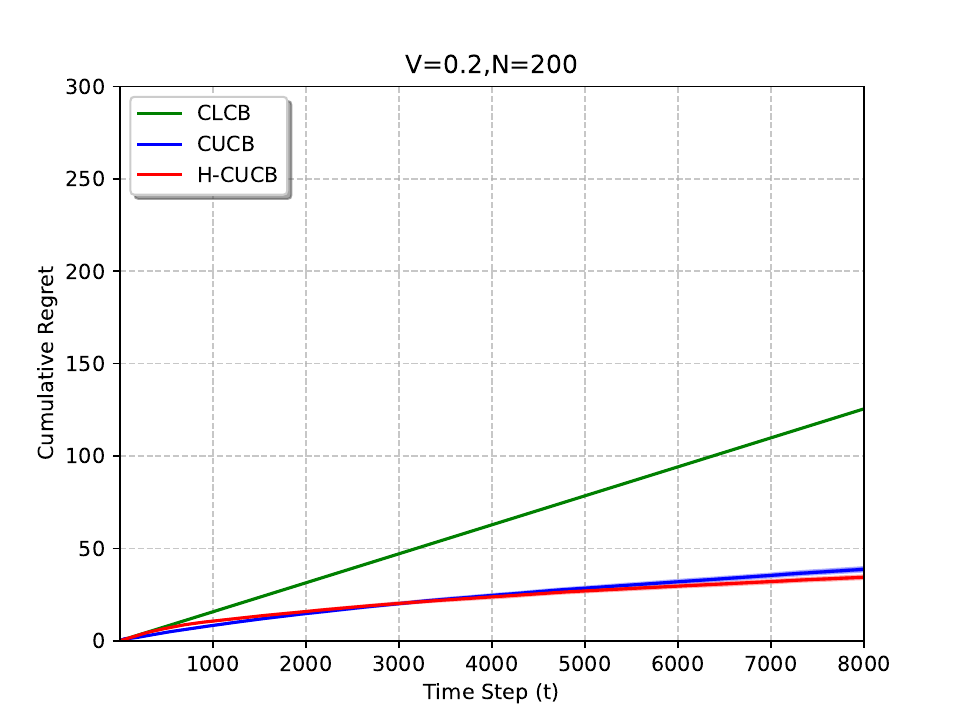}
\includegraphics[width=0.32\textwidth,height=0.25\textwidth]{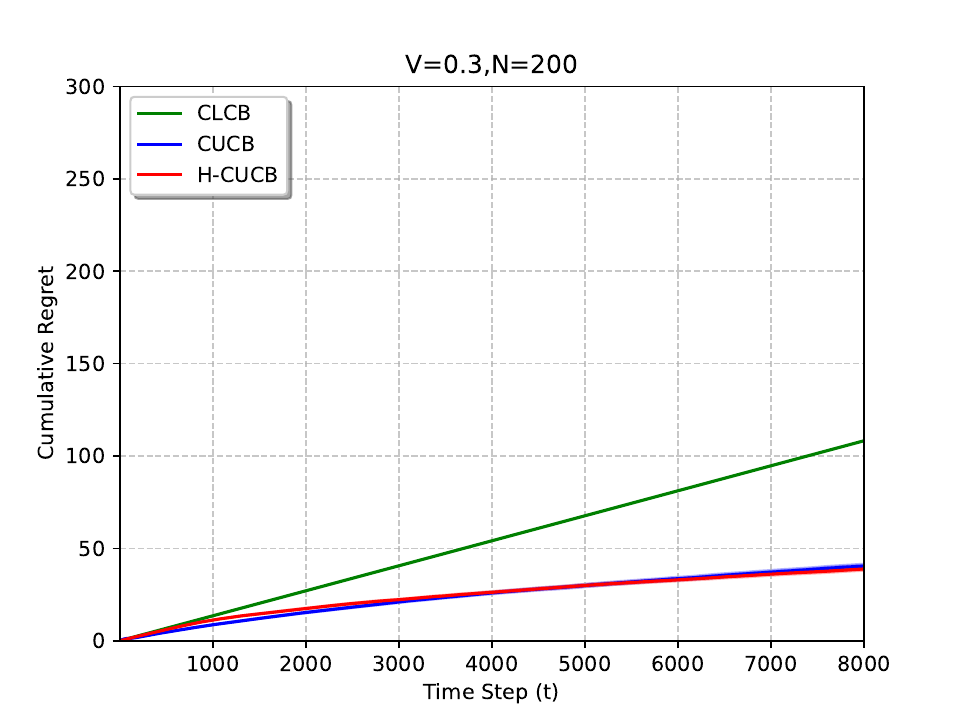}
\includegraphics[width=0.32\textwidth,height=0.25\textwidth]{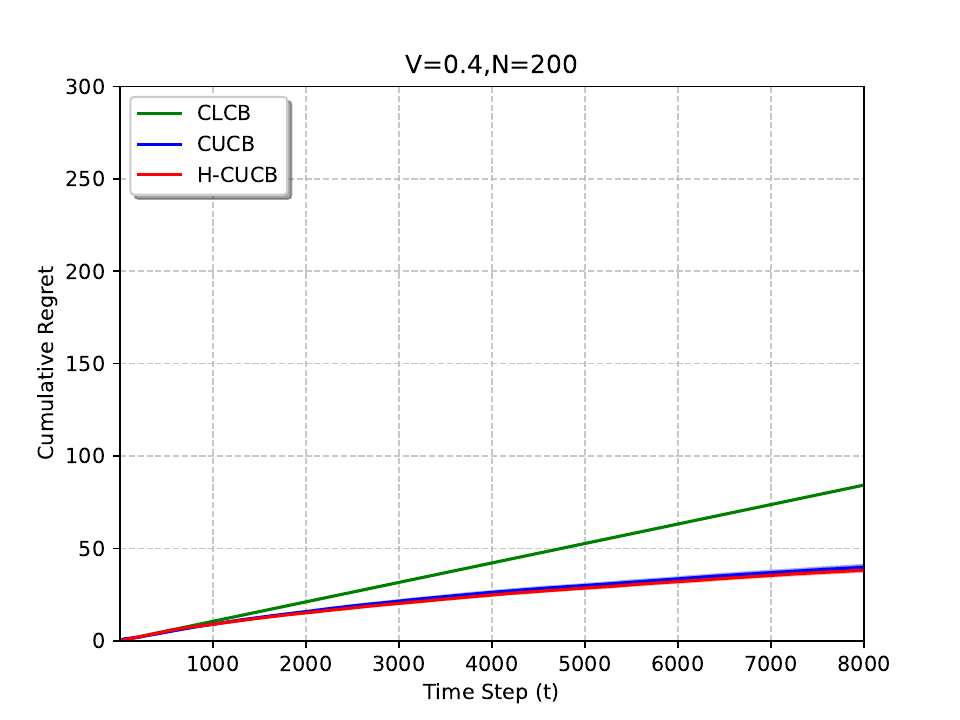}
    \caption{Performance comparison of hybrid CUCB against baselines with the biased offline data set. }
    \label{fig:exp_bias}
\end{figure}



\section{Conclusion}

We introduce H-CMAB-T, a new framework that extends classical CMAB-T by incorporating available offline data into online learning. We propose the hybrid CUCB algorithm, which selectively leverages offline observations via a minimum of two confidence bounds, controlled by a bias-aware mechanism. Theoretically, we established both gap-dependent and gap-independent regret bounds, showing that our method effectively reduces exploration through a data-dependent saving term. Empirical results further corroborate our theoretical findings, demonstrating the effectiveness of the proposed method in benchmark CMAB-T scenarios. The current CMAB-T framework does not naturally handle high-dimensional contexts or side information. Extending hybrid learning to contextual CMAB-T represents a promising direction, with potential for broader applicability in practical scenarios.

\newpage
\bibliographystyle{iclr2026_conference}
\bibliography{references} 

@article{wen2017online,
  title={Online influence maximization under independent cascade model with semi-bandit feedback},
  author={Wen, Zheng and Kveton, Branislav and Valko, Michal and Vaswani, Sharan},
  journal={Advances in neural information processing systems},
  volume={30},
  year={2017}
}

@article{cheung2024minucb,
  title={Leveraging (Biased) Information: Multi-armed Bandits with Offline Data},
  author={Cheung, Wang Chi and Lyu, Lixing},
  journal={arXiv preprint arXiv:2405.02594},
  year={2024},
  note={Accepted to ICML 2024},
  doi={10.48550/arXiv.2405.02594}
}

@inproceedings{wang2017cmabt,
  author       = {Wang, Qinshi and Chen, Wei},
  title        = {Improving Regret Bounds for Combinatorial Semi-Bandits with Probabilistically Triggered Arms and Its Applications},
  booktitle    = {Advances in Neural Information Processing Systems (NeurIPS)},
  volume       = {30},
  year         = {2017},
  publisher    = {Curran Associates, Inc.}
}

@article{chen2016combinatorial,
  author  = {Wei Chen and Yajun Wang and Yang Yuan and Qinshi Wang},
  title   = {Combinatorial Multi-Armed Bandit and Its Extension to Probabilistically Triggered Arms},
  journal = {Journal of Machine Learning Research},
  year    = {2016},
  volume  = {17},
  number  = {50},
  pages   = {1--33}
}

@InProceedings{liu2023contextualCMAB,
  title = 	 {Contextual Combinatorial Bandits with Probabilistically Triggered Arms},
  author =       {Liu, Xutong and Zuo, Jinhang and Wang, Siwei and Lui, John C.S. and Hajiesmaili, Mohammad and Wierman, Adam and Chen, Wei},
  booktitle = 	 {Proceedings of the 40th International Conference on Machine Learning},
  pages = 	 {22559--22593},
  year = 	 {2023},
  editor = 	 {Krause, Andreas and Brunskill, Emma and Cho, Kyunghyun and Engelhardt, Barbara and Sabato, Sivan and Scarlett, Jonathan},
  volume = 	 {202},
  series = 	 {Proceedings of Machine Learning Research},
  month = 	 {23--29 Jul},
  publisher =    {PMLR}
}

@article{auer2002finite,
  title={Finite-time analysis of the multiarmed bandit problem},
  author={Auer, Peter and Cesa-Bianchi, Nicol{\`o} and Fischer, Paul},
  journal={Machine Learning},
  volume={47},
  number={2-3},
  pages={235--256},
  year={2002},
  publisher={Springer}
}

@article{bubeck2012regret,
  title={Regret analysis of stochastic and nonstochastic multi-armed bandit problems},
  author={Bubeck, S{\'e}bastien and Cesa-Bianchi, Nicol{\`o}},
  journal={Foundations and Trends in Machine Learning},
  volume={5},
  number={1},
  pages={1--122},
  year={2012},
  publisher={Now Publishers, Inc.}
}

@book{lattimore2020bandit,
  title={Bandit Algorithms},
  author={Lattimore, Tor and Szepesv{\'a}ri, Csaba},
  year={2020},
  publisher={Cambridge University Press}
}

@inproceedings{chen2013combinatorial,
  title={Combinatorial multi-armed bandit: General framework and applications},
  author={Chen, Wei and Wang, Yajun and Yuan, Yang},
  booktitle={Proceedings of the 30th International Conference on Machine Learning (ICML)},
  year={2013},
  pages={151--159}
}

@InProceedings{jin2021pessimism,
  title = 	 {Is Pessimism Provably Efficient for Offline RL?},
  author =       {Jin, Ying and Yang, Zhuoran and Wang, Zhaoran},
  booktitle = 	 {Proceedings of the 38th International Conference on Machine Learning},
  pages = 	 {5084--5096},
  year = 	 {2021},
  editor = 	 {Meila, Marina and Zhang, Tong},
  volume = 	 {139},
  series = 	 {Proceedings of Machine Learning Research},
  month = 	 {18--24 Jul},
  publisher =    {PMLR},
  url = 	 {https://proceedings.mlr.press/v139/jin21e.html}
}

@article{liu2025offlineCMAB,
  title={Offline Learning for Combinatorial Multi-armed Bandits},
  author={Liu, Xutong and Dai, Xiangxiang and Zuo, Jinhang and Wang, Siwei and Wong, Carlee-Joe and Lui, John and Chen, Wei},
  journal={arXiv preprint arXiv:2501.19300},
  year={2025}
}

@inproceedings{kong2023onlineDC,
  title={Online Influence Maximization under Decreasing Cascade Model},
  author={Kong, Fang and Xie, Jize and Wang, Baoxiang and Yao, Tao and Li, Shuai},
  booktitle={Proceedings of the 2023 International Conference on Autonomous Agents and Multiagent Systems},
  pages={2197--2204},
  year={2023}
}

@inproceedings{liu2024combinatorialMDP,
  title={Combinatorial multivariant multi-armed bandits with applications to episodic reinforcement learning and beyond},
  author={Liu, Xutong and Wang, Siwei and Zuo, Jinhang and Zhong, Han and Wang, Xuchuang and Wang, Zhiyong and Li, Shuai and Hajiesmaili, Mohammad and Lui, John CS and Chen, Wei},
  booktitle={International Conference on Machine Learning},
  pages={32139--32172},
  year={2024}
}

@article{liu2022batch,
  title={Batch-size independent regret bounds for combinatorial semi-bandits with probabilistically triggered arms or independent arms},
  author={Liu, Xutong and Zuo, Jinhang and Wang, Siwei and Joe-Wong, Carlee and Lui, John and Chen, Wei},
  journal={Advances in Neural Information Processing Systems},
  volume={35},
  pages={14904--14916},
  year={2022}
}

@inproceedings{shivaswamy2012multiHistory,
  title={Multi-armed bandit problems with history},
  author={Shivaswamy, Pannagadatta and Joachims, Thorsten},
  booktitle={Artificial intelligence and statistics},
  pages={1046--1054},
  year={2012},
  organization={PMLR}
}

@inproceedings{song2023hybrid,
  title={Hybrid RL: Using both offline and online data can make RL efficient},
  author={Song, Yuda and Zhou, Yifei and Sekhari, Ayush and Bagnell, Drew and Krishnamurthy, Akshay and Sun, Wen},
  booktitle={International Conference on Learning Representations},
year={2023}
}

@article{oetomo2023cutting,
  title={Cutting to the chase with warm-start contextual bandits},
  author={Oetomo, Bastian and Perera, R Malinga and Borovica-Gajic, Renata and Rubinstein, Benjamin IP},
  journal={Knowledge and Information Systems},
  volume={65},
  number={9},
  pages={3533--3565},
  year={2023},
  publisher={Springer}
}

@article{agnihotri2024online,
  title={Online bandit learning with offline preference data},
  author={Agnihotri, Akhil and Jain, Rahul and Ramachandran, Deepak and Wen, Zheng},
  journal={arXiv preprint arXiv:2406.09574},
  year={2024}
}

@inproceedings{qu2025hybrid,
  title={Hybrid Transfer Reinforcement Learning: Provable Sample Efficiency from Shifted-Dynamics Data},
  author={Qu, Chengrui and Shi, Laixi and Panaganti, Kishan and You, Pengcheng and Wierman, Adam},
  booktitle={International Conference on Artificial Intelligence and Statistics},
    year={2025}
}

@article{rashidinejad2021bridging,
  title={Bridging Offline Reinforcement Learning and Imitation Learning: A Tale of Pessimism},
  author={Rashidinejad, Paria and Zhu, Banghua and Ma, Cong and Jiao, Jiantao and Russell, Stuart J.},
  journal={IEEE Transactions on Information Theory},
  volume={68},
  pages={8156--8196},
  year={2021}
}

@inproceedings{li2022pessimism,
  title={Pessimism for Offline Linear Contextual Bandits using $\ell_p$ Confidence Sets},
  author={Li, Gene and Ma, Cong and Srebro, Nati},
  booktitle={Advances in Neural Information Processing Systems (NeurIPS)},
  volume={35},
  pages={20974--20987},
  year={2022}
}

@article{nguyen2021sample,
  title={Sample Complexity of Offline Reinforcement Learning with Deep ReLU Networks},
  author={Nguyen-Tang, Thanh and Gupta, Sunil and Tran-The, Hung and Venkatesh, Svetha},
  journal={arXiv preprint arXiv:2103.06671},
  year={2021}
}

@inproceedings{nguyen2022offline,
  title={Offline Neural Contextual Bandits: Pessimism, Optimization and Generalization},
  author={Nguyen-Tang, Thanh and Gupta, Sunil and Nguyen, A Tuan and Venkatesh, Svetha},
  booktitle={International Conference on Learning Representations},
    year={2022}
}

@article{zhu2023optimalcaching,
  title={Towards Optimal Caching and Model Selection for Large Model Inference},
  author={Zhu, Banghua and Sheng, Ying and Zheng, Lianmin and Barrett, Clark and Jordan, Michael and Jiao, Jiantao},
  journal={Advances in Neural Information Processing Systems},
  volume={36},
  pages={59062--59094},
  year={2023}
}

@article{pope2022scalingtransformers,
  title={Efficiently Scaling Transformer Inference},
  author={Pope, Reiner and Douglas, Sholto and Chowdhery, Aakanksha and Devlin, Jacob and Bradbury, James and Levskaya, Anselm and Heek, Jonathan and Xiao, Kefan and Agrawal, Shivani and Dean, Jeff},
  journal={arXiv preprint arXiv:2211.05102},
  year={2022}
}

@article{gim2023promptcache,
  title={Prompt Cache: Modular Attention Reuse for Low-Latency Inference},
  author={Gim, In and Chen, Guojun and Lee, Seung Seob and Sarda, Nikhil and Khandelwal, Anurag and Zhong, Lin},
  journal={arXiv preprint arXiv:2311.04934},
  year={2023}
}

@article{qu2024mobilellm,
  title={Mobile Edge Intelligence for Large Language Models: A Contemporary Survey},
  author={Qu, Guanqiao and Chen, Qiyuan and Wei, Wei and Lin, Zhengyi and Chen, Xianhao and Huang, Kaibin},
  journal={arXiv preprint arXiv:2407.18921},
  year={2024}
}

\newpage
\appendix

\section*{Appendix}

\section{Technical Lemmas}

\begin{definition}[Event-Filtered Regret~\citep{wang2017cmabt}]
\label{def:Event-Filtered Regret}
For any series of events $\{\mathcal{E}_t\}_{t \geq 1}$ indexed by round number $t$, we define $Reg_{\mu, \alpha}^A(T, \{\mathcal{E}_t\}_{t \geq 1})$ as the regret filtered by events $\{\mathcal{E}_t\}_{t \geq 1}$, that is, regret is only counted in round $t$ if $\mathcal{E}_t$ happens in round $t$. Formally,
\[
Reg_{\mu, \alpha}^A(T, \{\mathcal{E}_t\}_{t \geq 1}) = \mathbb{E}\left[\sum_{t=1}^T \mathbb{I}(\mathcal{E}_t)(\alpha \cdot \text{opt}_\mu - r_\mu(S_t^A))\right].
\]
\end{definition}
For convenience, $A$, $\alpha$, $\mu$ and/or $T$ can be omitted when the context is clear, and we simply use $Reg_{\mu, \alpha}^A(T, \mathcal{E}_t)$ instead of $Reg_{\mu, \alpha}^A(T, \{\mathcal{E}_t\}_{t \geq 1})$.

The regret upper bound relies on considering the following events of accurate estimations by $\mathrm{UCB}_t(i)$ and $\mathrm{UCB}_t^\mathrm{S}(i)$. 
For every $t$, define:
\[
\mathcal{N}_t =  \bigcap_{i \in [m]} \left( \mathcal{N}_t(i) \cap \mathcal{N}_t^\mathrm{S}(i) \right), \quad \text{where}
\]
\[
\mathcal{N}_t(i) = \left\{ \mu^{\text{on}}_i \leq \mathrm{UCB}_t(i) \leq \mu^{\text{on}}_i + 2 \, \mathrm{rad}_t(i) \right\},
\]

\begin{equation*}
\mathcal{N}_t^\mathrm{S}(i) = \left\{
\begin{aligned}
\mu^{\text{on}}_i &\leq \mathrm{UCB}_t^\mathrm{S}(i) \leq \mu^{\text{on}}_i + \mathrm{rad}_t^\mathrm{S}(i) \\
&\quad + \left[
\sqrt{ \frac{2 \log(2t / \delta_t)}{N_i + T_{i,t-1}} }
+ \frac{ N_i \cdot \left( \mu^{\text{off}}_i - \mu^{\text{on}}_i \right) }{ N_i + T_{i,t-1} }
\right]
\end{aligned}
\right\}.
\end{equation*}
\begin{lemma}\label{lemma:good event}~\citep{cheung2024minucb}
For the event $\mathcal{N}_t$ defined above, we have $\Pr(\mathcal{N}_t) \geq 1 - 2m\delta_t.$ During the part of discussion on regret bound, we set $\delta_t=1/(2mt^2)$ for $t=1,2,...,T$.
\end{lemma}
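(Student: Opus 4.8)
The plan is to prove the high-probability bound by a union bound over the $2m$ constituent events $\{\mathcal{N}_t(i)\}_{i\in[m]}$ and $\{\mathcal{N}_t^\mathrm{S}(i)\}_{i\in[m]}$, showing that each fails with probability at most $\delta_t$. Since $\mathcal{N}_t = \bigcap_{i\in[m]}(\mathcal{N}_t(i)\cap\mathcal{N}_t^\mathrm{S}(i))$, a union bound then yields $\Pr(\mathcal{N}_t^c) \le \sum_{i}(\Pr(\mathcal{N}_t(i)^c) + \Pr(\mathcal{N}_t^\mathrm{S}(i)^c)) \le 2m\delta_t$, which is exactly the claim. The only nonroutine ingredient is that the online count $T_{i,t-1}$ is random and history-dependent, so Hoeffding cannot be applied directly with a fixed sample size.

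For the online event $\mathcal{N}_t(i)$, I first observe that both of its defining inequalities follow from the single two-sided deviation event $|\hat\mu_i^{\text{on}} - \mu_i^{\text{on}}| \le \mathrm{rad}_t(i)$: the lower bound $\mu_i^{\text{on}} \le \hat\mu_i^{\text{on}} + \mathrm{rad}_t(i)$ and the upper bound $\hat\mu_i^{\text{on}} + \mathrm{rad}_t(i) \le \mu_i^{\text{on}} + 2\mathrm{rad}_t(i)$ are each immediate from one side of it. To handle the random count I would invoke the identifiability assumption: conditioned on being triggered, each online observation of arm $i$ has mean $\mu_i^{\text{on}}$ and is independent of the triggering, so the empirical mean after $T_{i,t-1}=s$ triggers is distributed as the average of $s$ i.i.d.\ samples from the online law of $X_i$. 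I would then apply Hoeffding for each fixed $s$ with radius $\sqrt{2\log(4mt^3)/s}$, obtaining a per-$s$ failure probability of $2(4mt^3)^{-4}$, and union-bound over the at most $t$ possible values $s\in\{1,\dots,t\}$ (the case $s=0$ makes $\mathrm{rad}_t(i)=\infty$, so $\mathcal{N}_t(i)$ holds trivially). The resulting bound $2t(4mt^3)^{-4}$ sits comfortably below $\delta_t = 1/(2mt^2)$.

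The hybrid event $\mathcal{N}_t^\mathrm{S}(i)$ is handled analogously, now concentrating the weighted mean $\hat\mu_i^{\text{hyb}} = (N_i\hat\mu_i^{\text{off}} + T_i\hat\mu_i^{\text{on}})/(N_i+T_i)$, whose expectation satisfies $\mathbb{E}[\hat\mu_i^{\text{hyb}}] - \mu_i^{\text{on}} = N_i(\mu_i^{\text{off}}-\mu_i^{\text{on}})/(N_i+T_i)$. For fixed $T_i=s$ this is an average of $N_i+s$ independent $[0,1]$ variables (offline data being independent of online), so Hoeffding gives the two-sided event $|\hat\mu_i^{\text{hyb}} - \mathbb{E}[\hat\mu_i^{\text{hyb}}]| \le \sqrt{2\log(4mt^3)/(N_i+s)}$ with failure probability $2(4mt^3)^{-4}$. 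On this event the upper bound in $\mathcal{N}_t^\mathrm{S}(i)$ follows by writing $\mathrm{UCB}_t^\mathrm{S}(i) - \mu_i^{\text{on}}$ as the deviation plus the bias gap plus $\mathrm{rad}_t^\mathrm{S}(i)$; here I use the key identity $\log(2t/\delta_t) = \log(4mt^3)$ under $\delta_t = 1/(2mt^2)$, which makes the deviation term coincide exactly with the extra term appearing in the definition of $\mathcal{N}_t^\mathrm{S}(i)$. For the lower bound $\mu_i^{\text{on}} \le \mathrm{UCB}_t^\mathrm{S}(i)$, the deviation contributes at least $-\sqrt{2\log(4mt^3)/(N_i+T_i)}$, which cancels the deviation component of $\mathrm{rad}_t^\mathrm{S}(i)$, leaving $N_i(V_i + \mu_i^{\text{off}} - \mu_i^{\text{on}})/(N_i+T_i) = N_i\omega_i/(N_i+T_i) \ge 0$, since $V_i \ge |\mu_i^{\text{off}}-\mu_i^{\text{on}}|$ forces $\omega_i \ge 0$. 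A union bound over $s$ again puts the failure probability below $\delta_t$.

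I expect the main obstacle to be the careful treatment of the random, history-dependent trigger count $T_{i,t-1}$: the identifiability assumption is precisely what licenses coupling the online empirical mean with a fixed i.i.d.\ sequence, and the generous cubic-in-$t$ logarithmic factor is what lets the union bound over counter values stay below $\delta_t$. The remaining algebra—verifying the sign of the bias term through $\omega_i \ge 0$ and matching $\log(2t/\delta_t)$ with $\log(4mt^3)$—is routine once the two concentration events are set up correctly.
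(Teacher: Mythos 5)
Your proof is correct. The paper itself never proves this lemma—it is imported by citation from \citet{cheung2024minucb}—and your argument (union bound over the $2m$ per-arm events; Hoeffding at each fixed trigger count $s$ with a further union bound over the at most $t$ values of $s$, justified by the identifiability assumption; the bias decomposition $\mathbb{E}[\hat\mu_i^{\mathrm{hyb}}] - \mu_i^{\mathrm{on}} = N_i(\mu_i^{\mathrm{off}}-\mu_i^{\mathrm{on}})/(N_i+T_{i,t-1})$ for the hybrid estimator; the cancellation via $\omega_i \ge 0$ for the lower bound; and the identity $\log(2t/\delta_t)=\log(4mt^3)$ under $\delta_t = 1/(2mt^2)$) is precisely the standard route by which this concentration result is established in that reference, with the numerics checking out comfortably since $2t(4mt^3)^{-4} \le 1/(2mt^2)$.
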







In the following, we provide the useful lemmas for the gap-independent regret bound. We firstly define two linear program (IP) and (LP):
\begin{align*}
\text{IP} : \max_{C_T(i),i\in[m]} &\quad \sum_{i \in [m]} \sum_{n(i) = 1}^{C_T(i)} \sqrt{\frac{1}{N_i + n(i)}} \\
\text{s.t.} &\quad \sum_{i \in [m]} C_T(i) \leq KT, \\
&\quad C_T(i) \in \mathbb{N}^+ \quad \forall i \in [m].
\end{align*}

\begin{align*}
\text{LP} : \max_{\tau,\, n} &\quad \tau \\
\text{s.t.} &\quad \tau \leq N_i + n(i) \quad \forall i\in[m] ,\\
&\quad \sum_{i \in [m]} n(i) \leq KT ,\\
&\quad \tau \geq 0,\quad n(i) \geq 0 \quad \forall i\in[m].
\end{align*}

And we suppose that $(C^*_T(i))_{i\in[m]}\in \mathbb{N}_{\ge 0}^m$ and $(\tau_*,\{n_*(i)\}_{i\in[m]})$ are the solution to (IP) and (LP) correspondingly.

\begin{lemma}\label{fact2}
For the LP defined above, we have $n_*(i) = \max \{ \tau_*  -N_i,0\}, \quad \forall i \in [m]$ .       
\end{lemma}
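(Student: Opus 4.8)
The plan is to reduce the LP, which has $m+1$ decision variables, to a one-dimensional problem in $\tau$ alone by characterizing, for each target value of $\tau$, the cheapest feasible allocation vector $n$. Concretely, I would introduce the minimal-budget function $f(\tau) := \sum_{i\in[m]} \max\{\tau - N_i, 0\}$, which records the smallest total allocation $\sum_i n(i)$ needed to make the triggering constraints $\tau \le N_i + n(i)$ hold jointly with $n(i) \ge 0$.

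First I would prove the two inclusions that pin down the feasible range of $\tau$. For the necessity direction, I take any feasible point $(\tau, n)$; combining $n(i) \ge \tau - N_i$ with $n(i) \ge 0$ gives $n(i) \ge \max\{\tau - N_i, 0\}$ coordinatewise, and summing against the budget constraint yields $KT \ge \sum_i n(i) \ge f(\tau)$. For the sufficiency direction, whenever $f(\tau) \le KT$ I would exhibit the explicit allocation $n(i) = \max\{\tau - N_i, 0\}$ and verify every constraint (the triggering constraints hold with equality or slack, nonnegativity is immediate, and the budget constraint reads exactly $f(\tau) \le KT$). Together these show that $\tau$ is LP-feasible if and only if $f(\tau) \le KT$, so the optimal value equals $\tau_* = \max\{\tau \ge 0 : f(\tau) \le KT\}$.

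Next I would argue that this maximum is attained and that the budget is binding. Since each summand $\max\{\tau - N_i, 0\}$ is continuous and nondecreasing in $\tau$, so is $f$; moreover $f(0) = 0 \le KT$ and $f(\tau) \to \infty$ as $\tau \to \infty$, so the feasible set $\{\tau \ge 0 : f(\tau) \le KT\}$ is a nonempty, closed, bounded interval, and the maximizer $\tau_*$ exists with $f(\tau_*) = KT$ by continuity and eventual strict monotonicity. Plugging $\tau_*$ into the explicit allocation from the sufficiency step then gives the claimed $n_*(i) = \max\{\tau_* - N_i, 0\}$, which is feasible and attains the optimum, establishing the lemma.

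The main subtlety to address is that the optimal allocation is \emph{not} unique: the objective depends only on $\tau$, so once $\tau_*$ is fixed, any $n$ with $n(i) \ge \max\{\tau_* - N_i, 0\}$ and $\sum_i n(i) \le KT$ is also optimal. I would therefore phrase the conclusion as identifying the canonical, coordinatewise-minimal optimal allocation, and note that the necessity inequality $n(i) \ge \max\{\tau_* - N_i, 0\}$ shows every optimizer dominates this choice, so $n_*(i) = \max\{\tau_* - N_i, 0\}$ is the distinguished minimal optimizer used in the subsequent gap-independent analysis.
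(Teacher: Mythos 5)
Your reduction to a one-dimensional problem via the minimal-budget function $f(\tau) = \sum_{i\in[m]} \max\{\tau - N_i, 0\}$ is sound and genuinely different from the paper's argument: the paper fixes an arbitrary optimal solution and derives a contradiction by a local exchange (if some $n_*(i')$ exceeds $\max\{\tau_* - N_{i'},0\}$ by $\varepsilon$, it shifts $\varepsilon/m$ of budget to every other arm and raises $\tau$ by $\varepsilon/m$), whereas you characterize the entire feasible set of $\tau$ and read off the optimum. Your feasibility equivalence, the existence of $\tau_*$, and the tightness $f(\tau_*) = KT$ are all correct (tightness uses $KT>0$ and the fact that $f$ is strictly increasing for $\tau > \min_i N_i$, both of which hold here), and tightness is in fact the property the paper needs downstream in the next lemma.

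However, your final paragraph contains a genuine error that leaves the lemma, as stated, unproven by your argument. You claim the optimal allocation is not unique and therefore propose to reinterpret $n_*$ as the ``canonical coordinatewise-minimal optimizer.'' In fact uniqueness does hold, and it follows in one line from facts you already established: any $n$ that is feasible together with $\tau_*$ satisfies $n(i) \ge \max\{\tau_* - N_i, 0\}$ coordinatewise and
\[
\sum_{i\in[m]} n(i) \;\le\; KT \;=\; f(\tau_*) \;=\; \sum_{i\in[m]} \max\{\tau_* - N_i, 0\},
\]
so the slack terms $n(i) - \max\{\tau_* - N_i, 0\}$ are nonnegative yet sum to at most zero, forcing $n(i) = \max\{\tau_* - N_i, 0\}$ for every $i$. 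The non-uniqueness you describe could only occur if the budget constraint had slack at the optimum, which your own result $f(\tau_*) = KT$ rules out. Adding this one line turns your proposal into a complete (and arguably cleaner) proof of the lemma exactly as stated, i.e., for an arbitrary optimal solution $(\tau_*, \{n_*(i)\}_{i\in[m]})$; without it, you have only exhibited one optimal solution of the desired form, which is strictly weaker than what the paper claims and uses.
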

\begin{proof}
Since optimal solutions must be feasible, then we have $n_*(i) \ge \max \{ \tau_*  -N_i,0\}, \quad \forall i \in [m]$. We only need to prove that if $\exists$ such an arm $i'$ that $n_*(i') - \max \{ \tau_*  -N_{i'},0\}=\epsilon >0$, it will bring into a contradiction statement. In fact, we can construct another solution $(\tau',\{n'(i)\}_{i\in[m]})$ by this immediately:
\[
\left\{
\begin{aligned}
\tau' &= \tau_* + \frac{\varepsilon}{m} \\
n'(i) &= 
\begin{cases}
n_*(i) + \frac{\varepsilon}{m} & \text{if } i \ne i' \\
n_*(i) - \frac{m-1}{m} \cdot \varepsilon & \text{if } i = i'.
\end{cases} 
\end{aligned}  
\right. 
\]

Then we have $\tau'>\tau_*$, which contradicts the optimality of $\tau_*$.
\end{proof}

\begin{lemma}\label{fact1}
For the LP and IP defined above, we have $C_T^*(i) \leq \max \{\left\lceil \tau_* \right\rceil -N_i,0\}, \quad \forall i \in [m]$ .   
\end{lemma}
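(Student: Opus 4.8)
The plan is to recognize that both (IP) and (LP) are water-filling problems sharing the budget $KT$, and that the integer water level of (IP) coincides with the ceiling of the continuous water level $\tau_*$ of (LP). First I would reformulate (IP) as a selection problem: view each arm $i$ as offering an infinite stream of ``units'' indexed by $c \ge 1$, where taking the $c$-th unit of arm $i$ contributes marginal value $v_{i,c} := 1/\sqrt{N_i + c}$, so that taking $C_T(i)$ units corresponds exactly to the inner sum $\sum_{c=1}^{C_T(i)} v_{i,c}$. Since $v_{i,c}$ is strictly decreasing in $c$ for each fixed $i$, an optimal solution never skips a unit, so a selection is equivalent to choosing a prefix per arm; consequently the optimal (IP) solution simply takes the $KT$ units of largest marginal value across all arms (a standard greedy/exchange argument, with the budget tight because every marginal value is positive).

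Next I would make the threshold explicit. Define $f(x) := \sum_{i\in[m]} \max\{x - N_i, 0\}$, which for integer $x$ counts the units $(i,c)$ with $N_i + c \le x$ (equivalently $v_{i,c} \ge 1/\sqrt{x}$), and set $H := \min\{h \in \mathbb{N} : f(h) \ge KT\}$. Because $f(H) \ge KT$, there are at least $KT$ units at level $N_i + c \le H$, so the top-$KT$ selection is drawn entirely from these; hence every arm satisfies $N_i + C_T^*(i) \le H$, i.e. $C_T^*(i) \le \max\{H - N_i, 0\}$. It then remains to show $H \le \lceil \tau_* \rceil$. Here I would invoke Lemma~\ref{fact2}: since $n_*(i) = \max\{\tau_* - N_i, 0\}$ and the (LP) budget is tight at the optimum (otherwise $\tau$ could be raised), $f(\tau_*) = \sum_i n_*(i) = KT$. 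By minimality of $H$ we have $f(H-1) < KT = f(\tau_*)$, and since $f$ is non-decreasing this forces $H - 1 < \tau_*$; as $H-1$ is an integer, $H - 1 \le \lceil\tau_*\rceil - 1$, i.e. $H \le \lceil\tau_*\rceil$. Combining the two bounds gives $C_T^*(i) \le \max\{H - N_i,0\} \le \max\{\lceil\tau_*\rceil - N_i, 0\}$ for every $i$, which is the claim.

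The routine parts are the greedy optimality of the top-$KT$ selection and the tightness of both budgets, both of which follow from the strict monotonicity and positivity of the marginal values. The main obstacle I anticipate is the integrality bookkeeping needed to pin $H$ down to $\lceil\tau_*\rceil$ rather than leaving it as large as $\tau_* + 1$. The decisive structural fact is that every marginal value has the form $1/\sqrt{\text{integer}}$, so the (IP) threshold sits at an integer level $H$; the inequality $f(H-1) < KT = f(\tau_*)$ with $f$ non-decreasing then yields $H \le \lceil\tau_*\rceil$, which is all that the bound requires. (One can further verify $H \ge \tau_*$, hence $H = \lceil\tau_*\rceil$, using that $f$ is strictly increasing wherever it is positive, but this sharper identity is not needed.) Care is also warranted for units at level exactly $H$, i.e. ties at the threshold: these may be selected only partially in order to exhaust the budget, which is precisely why the conclusion is stated as an inequality and why no arm is ever filled beyond level $H$.
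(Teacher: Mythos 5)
Your proof is correct, but it is organized differently from the paper's. The paper argues by direct contradiction and a single exchange: assuming some arm $i'$ has $C_T^*(i') \ge \max\{\lceil\tau_*\rceil - N_{i'},0\}+1$, it uses Lemma~\ref{fact2} together with the tight budget $\sum_i n_*(i)=KT$ to locate a ``deficient'' arm $i''$ with $C_T^*(i'') \le \max\{\lceil\tau_*\rceil - N_{i''},0\}-1$, and then moves one unit from $i'$ to $i''$, gaining at least $1/\sqrt{\lceil\tau_*\rceil} - 1/\sqrt{\max\{\lceil\tau_*\rceil, N_{i'}\}+1}>0$, contradicting optimality. You instead build the water-filling picture explicitly: greedy/exchange optimality of the top-$KT$ marginal values, the counting function $f(x)=\sum_i \max\{x-N_i,0\}$, the integer water level $H=\min\{h: f(h)\ge KT\}$, and then the bridge $H\le\lceil\tau_*\rceil$ via $f(\tau_*)=KT$ (Lemma~\ref{fact2} plus budget tightness) and minimality of $H$. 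The ingredients are the same---decreasing marginal values, Lemma~\ref{fact2}, and tightness of the LP budget, which both you and the paper assert with only brief justification---but your route yields a sharper structural fact (the threshold level $H$, essentially equal to $\lceil\tau_*\rceil$) and makes tie-handling explicit, at the cost of more machinery; the paper's single-swap contradiction is shorter and entirely local. Two small imprecisions in your write-up, neither fatal: the phrase ``every arm satisfies $N_i + C_T^*(i)\le H$'' is false for arms with $C_T^*(i)=0$ and $N_i>H$ (the correct statement is that every \emph{taken} unit has level at most $H$, which is all the $\max\{\cdot,0\}$ conclusion needs), and ``the optimal (IP) solution simply takes the $KT$ units of largest marginal value'' presumes a canonical optimum; with ties there may be several optimal solutions, so you should run the swap argument against an \emph{arbitrary} optimum (any optimum taking a unit at level above $H$ leaves a free unit at level at most $H$ and can be strictly improved), which your tie remark essentially covers.
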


\begin{proof}
Suppose that there exists an arm $i'$ such that $C_T^*(i') \ge \max \{\left\lceil \tau_* \right\rceil -N_{i'},0\}+1$, then there must exist another arm $i''\ne i'$ such that $C_T^*(i'') \leq \max \{\left\lceil \tau_* \right\rceil -N_{i''},0\}-1$, or we will have:
\[
\sum_{i \in [m] } C_T^*(i) 
> \sum_{i \in [m] } \max \{\left\lceil \tau_* \right\rceil -N_i,0\}
\ge \sum_{i \in [m] } \max \{ \tau_*  -N_i,0\}
\overset{(a)}=\sum_{i \in [m] } n_*(i)=KT,
\]

which contradicts the constraint of (LP). Here, (a) is from lemma~\ref{fact2}. As a result, we can construct a feasible solution $\tilde{C}_T(i)_{i\in[m]}\in \mathbb{N}_{\ge 0}^m$ by the existence of two arms $i'$ and $i''$ that:

\[
\tilde{C}_T(i) =
\begin{cases}
C_T^*(i) - 1 & \text{if } i = i' \\
C_T^*(i) + 1 & \text{if } i = i'' \\
C_T^*(i)     & \text{if } i \in [m] \setminus \{i', i''\}.
\end{cases}
\]

By the property that $C_T^*(i') \geq 1$, $\tilde{C}_T(i') \geq 0$, and $(\tilde{C}_T(i))_{i \in [m]}$ is a feasible solution. But then we have
\begin{align*}
&\sum_{i \in [m]} \sum_{n(i) = 1}^{\tilde{C}_T(i)} \sqrt{\frac{1}{n(i) + N_i}}
-
\sum_{i \in [m]} \sum_{n(i) = 1}^{C_T^*(i)} \sqrt{\frac{1}{n(i) + N_i}} \\
&=
\sqrt{ \frac{1}{C_T^*(i'') + N_{i''} + 1} }
-
\sqrt{ \frac{1}{C_T^*(i') + N_{i'}} } \notag \\
&\geq
\sqrt{ \frac{1}{\lceil \tau_* \rceil} }
-
\sqrt{ \frac{1}{\max\{ \lceil \tau_* \rceil,\, N_{i'} \} + 1} }
> 0, 
\end{align*}

which contradicts the assumed optimality of $(N_T^*(i))_{i \in [m]}$.

\end{proof}

\section{Proof of Theorem~\ref{thm:regret-bound}}

We define the event
\[
F_t = \left\{ r_{S_t}(\bar{\mu}) \leq \alpha \cdot \mathrm{opt}_{\bar{\mu}} \right\},
\]
which captures that the oracle output based on the estimated means $\bar{\mu}$ at round $t$ achieves at least an $\alpha$-approximation of the optimal reward.

Let the filtration $\mathcal{F}_{t-1}$ represent all the history observed up to and including the decision $S_t$, formally:
\[
\mathcal{F}_{t-1} = \left( S_1, \tau_1, \{ X_{1,i} : i \in \tau_1 \}, \ldots, S_{t-1}, \tau_{t-1}, \{ X_{t-1,i} : i \in \tau_{t-1} \}, S_t \right).
\]
Here, $\tau_s$ denotes the triggered set at round $s$, and $X_{s,i}$ is the observed reward for arm $i$ in round $s$ if triggered. We emphasize that the filtration $\mathcal{F}_{t-1}$ already implicitly incorporates the information from the offline data.
In particular, the observations of arm $i$ offline affect the initialization of arm statistics such as $\mathrm{rad}_t^{\mathrm{S}}$and 
$\mathrm{UCB}_t^{\mathrm{S}}$,
which in turn influence the selection of $S_t$ at each round $t$.
Therefore, the subsequent triggered sets $\tau_t$ and observed rewards $\{ X_{t,i} : i \in \tau_t \}$ are also conditioned on the offline data through the choice of $S_t$.

The conditional expectation at round $t$ is defined as
\[
\mathbb{E}_t[\cdot] = \mathbb{E}\left[ \cdot \mid \mathcal{F}_{t-1} \right],
\]
which aligns with the algorithm's access to the complete history $\mathcal{F}_{t-1}$ when making decisions at round $t$. Moreover, quantities such as $S_t$ and $\bar \mu_{i,t}$ are $\mathcal{F}_{t-1}$-measurable.

\begin{proof}
Since the $\mu_i^\text{on}$ is the actual mean we focus to learn about for every arm $i$, we set $\mu_i=\mu_i^\text{on}$ for every arm $i$ for simplicity. To unify the proofs for the proof of Theorem~\ref{thm:regret-bound} and the proof of bound $\psi$ in Theorem~\ref{thm:gap-indep}, we introduce a positive parameter $M_i$ for every arm $i$, which is introduced in \cite{wang2017cmabt}. We also further inherit the definition $M_S:=\max_{i \in \tilde{S}} M_i$ for each action $S$ and $M_S=0$ if $\tilde{S}= \varnothing$ from \cite{wang2017cmabt}. 

We first show that if $\{S_t \ge M_{S_t}\}$, $\mathcal{N}_t$ and $\neg F_t$, and given filtration $\mathcal{F}_{t-1}$, we have:


\begin{align}
\Delta_{S_t} = \mathbb{E}_t[\Delta_{S_t}]
&  {\leq} \mathbb{E}_t \left[2B\sum_{i \in \tilde{S_t}} \left[ p_i^{D,S_t} (\bar \mu_{i,t}-\mu_i)-\frac{M_i}{2BK} \right]\right] \label{eq:deta_{S_t}1} \\
& {\leq} \mathbb{E}_t \left[2B\sum_{i \in \tilde{S_t}}p_i^{D,S_t}\left[(\bar \mu_{i,t}-\mu_i)-\frac{M_i}{2BK}\right]\right] \label{eq:deta_{S_t}2}\\
&{=} \mathbb{E}_t \left[ 2B \sum_{i \in \tilde{S_t}} \mathbb{I}\{i \in \tau_t\} \left[(\bar \mu_{i,t}-\mu_i) -\frac{M_i}{2B K} \right] \right] \label{eq:deta_{S_t}3}\\
&{=} \mathbb{E}_t \left[ 2B \sum_{i \in \tau_t}  \left[(\bar \mu_{i,t}-\mu_i) -\frac{M_i}{2B K} \right] \right] , \label{eq:deta_{S_t}4} 
\end{align}

where (\ref{eq:deta_{S_t}1}) comes from exactly the equation (11) of Appendix B.3 in~\cite{wang2017cmabt}, (\ref{eq:deta_{S_t}2})  comes from the fact that $p_i^{D,S_t}\leq 1$ for every arm $i$, (\ref{eq:deta_{S_t}3})  follows from the fact that since the algorithm choose $S_t$ using the information of offline data, then $S_t$ and $\bar \mu_{i,t}$ are $\mathcal{F}_{t-1}$ measurable and the only randomness is the triggering set $\tau_t$ at round $t$, which satisfies the conditions of TPE trick in ~\cite{liu2023contextualCMAB}, so we can also use TPE trick~\citep{liu2023contextualCMAB} to replace $p_i^{D,S_t}= \mathbb{E}_t[\mathbb{I}\{i\in\tau_t\}]$, and (\ref{eq:deta_{S_t}4})  is the change of notion $\tau_t$.

Then we use $\kappa$ to describe the concentration for $(\bar \mu_{i,t}-\mu_i)-{M_i}/{(2BK)}$, the intuition is that we use different UCBs depending on how informative the offline data is.

\paragraph{Case 1} When $\omega_i < M_i/(2BK)$, let

\[
\hspace{0.6cm} \kappa _{T,N_i,\omega_i}(M_i,s) =
\begin{cases} 
    4B, & s = N_i = 0 \\
   4B\sqrt{\frac{2\log(4mT^3)}{N_i+s}}, & 0 \leq s \leq \ell_{T,N_i,\omega_i}(M_i)\\
    0, & s > \ell_{T,N_i,\omega_i}(M_i)~ or ~ \ell_{T,N_i,\omega_i}(M_i) \leq 0,
\end{cases}
\]

\hspace{0.2cm} where \[
\ell_{T,N_i,\omega_i}(M_i) = \frac{64 B^2 K^2 \log (4mT^3)}{M_i^2}-N_i\cdot \max\{1-\frac{2BK\omega_i}{M_i},0\}^2.
\]

We first prove when $\omega_i < M_i/(2BK)$, we have $(\bar \mu_{i,t}-\mu_i)-{M_i}/{(2BK)}\leq  \kappa _{T,N_i,\omega_i}(M_i,t)$:

Since $\omega_i < \frac{M_i}{2BK}$, $ \max \{ 1 - \frac{2BK\omega_i}{M_i}, 0 \}^2 = \left( 1 - \frac{2BK\omega_i}{M_i} \right)^2 > 0$,

\begin{enumerate}
    \item if $N_i \cdot \left( 1 - \frac{2BK \omega_i}{M_i} \right)^2 \geq \frac{32 B^2 K^2 \log (4mT^3)}{M_i^2}$, then we have:

    \begin{equation}\label{case1.1}
    2 \sqrt{\frac{2 \log (4mT^3)}{N_i + T_{i,t-1}}} \leq 2 \sqrt{\frac{2 \log (4mT^3)}{N_i}} \leq \left( 1 - \frac{2BK \omega_i}{M_i} \right) \cdot \frac{M_i}{2BK}.
    \end{equation}

    \begin{align*}
    \Rightarrow \bar{\mu}_{i,t} \overset{(a)}{\leq} \mathrm{UCB}^{\mathrm{S}}_t(i) 
    &\overset{(b)}{\leq} \mu_i + 2 \sqrt{\frac{2 \log (4mT^3)}{N_i + T_{i,t-1}}} + \frac{N_i}{N_i + T_{i,t-1}} \cdot \omega_i  \\
    &\overset{(c)}{\leq} \mu_i + \left( 1 - \frac{2BK \omega_i}{M_i} \right) \cdot \frac{M_i}{2BK} + \omega_i \\
    &=  \mu_i +\frac{M_i}{2BK} ,
    \end{align*}

    \begin{align*}
    \Rightarrow \bar{\mu}_{i,t} - \mu_i +\frac{M_i}{2BK} {\leq}0\leq \kappa _{T,N_i,\omega_i}(M_i,t) 
    \end{align*}

    where (a) comes from the definition of $\bar{\mu}_{i,t}$ in Algorithm~\ref{alg:hybrid cucb}, (b) follows from the lemma~\ref{lemma:good event} and the definition of $\omega_i$, and (c) follows from~(\ref{case1.1}) and $N_i/(N_i+T_{i,t-1})\leq1$.

    \item when $N_i \cdot \left( 1 - \frac{2BK \omega_i}{M_i} \right)^2 < \frac{32 B^2 K^2 \log (4mT^3)}{M_i^2}$, then we have:

    \begin{equation*}
         \ell_{T,N_i,\omega_i}(M_i) = \frac{64 B^2 K^2 \log (4mT^3)}{M_i^2} - N_i \max\left( 1 - \frac{2BK\omega_i}{M_i}, 0 \right)^2 > \frac{32 B^2 K^2 \log (4mT^3)}{M_i^2}.
    \end{equation*}    

    (1) when $T_{i,t-1} >\ell_{T,N_i,\omega_i}(M_i)> \frac{32 B^2 K^2 \log (4mT^3)}{M_i^2}$, we have :

    \begin{align}
    \bar{\mu}_{i,t} \overset{(a)}{\leq} \mathrm{UCB}_t(i) 
    \nonumber &\overset{(b)}{\leq}  \mu_i+ 2\sqrt{ \frac{2\log (4mt^3)}{T_{i,t-1}} } \\
    &\overset{(c)}{<}  \mu_i+\frac{M_i}{2BK}, \label{case1.2.(1)}
    \end{align}

    \begin{align*}
    \Rightarrow \bar{\mu}_{i,t} - \mu_i +\frac{M_i}{2BK} {\leq}0\leq \kappa _{T,N_i,\omega_i}(M_i,t) 
    \end{align*}

    where (a) comes from the definition of $\bar{\mu}_{i,t}$ in Algorithm~\ref{alg:hybrid cucb}, (b) follows from lemma~\ref{lemma:good event}, and (c) follows from the condition that $T_{i,t-1} > \frac{32 B^2 K^2 \log (4mT^3)}{M_i^2}$.

    (2) when $0\leq T_{i,t-1} \leq \ell_{T,N_i,\omega_i}(M_i)$, then we have:

    \begin{align*}
    \bar{\mu}_{i,t} - \mu_i - \frac{M_i}{2BK}
    &\overset{(a)}{\leq} \mathrm{UCB}^\mathrm{S}_t(i) - \mu_i - \frac{M_i}{2BK} \\
    &\overset{(b)}{\leq} 2\sqrt{\frac{2\log(4mT^3)}{N_i + T_{i,t-1}}} + \frac{N_i}{N_i + T_{i,t-1}} \omega_i - \frac{M_i}{2BK} \\
    &\overset{(c)}{\leq} 2\sqrt{\frac{2\log(4mT^3)}{N_i + T_{i,t-1}}} + \frac{N_i}{N_i + T_{i,t-1}} \cdot\frac{M_i}{2BK}  - \frac{M_i}{2BK} \\
    &= 2\sqrt{\frac{2\log(4mT^3)}{N_i + T_{i,t-1}}} + (\frac{N_i}{N_i + T_{i,t-1}} -1)\cdot\frac{M_i}{2BK}  \\
    &\overset{(d)}{\leq} 2\sqrt{\frac{2\log(4mT^3)}{N_i + T_{i,t-1}}} , 
    \end{align*}

    where (a) comes from the definition of $\bar{\mu}_{i,t}$ in Algorithm~\ref{alg:hybrid cucb}, (b) follows from lemma~\ref{lemma:good event}, (c) follows from the condition that $\omega_i < M_i/(2BK)$ and (d) follows from $N_i/(N_i+T_{i,t-1})-1\leq 0$.

\end{enumerate}

\paragraph{Case 2} When $\omega_i> M_i/(2BK)$, we firstly define:

\hspace{0.2cm}\[
\kappa _T(M_i, s) =
\begin{cases} 
    4B, & s = 0 \\
    4B{\sqrt{\frac{2\log (4mT^3)}{s}}}, &1 \leq s \leq \ell_T(M_i) \\
    0, & s > \ell_T(M_i) ,
\end{cases}
\]

\hspace{0.2cm} where \[
\ell_T(M_i) = \frac{32 B^2 K^2 \log (4mT^3)}{M_i^2}.
\]

Since $\omega_i \ge \frac{M_i}{2BK}$, $ \max \{ 1 - \frac{2BK\omega_i}{M_i}, 0 \}^2 =  0$. Follow the similar analysis in \textbf{case 1}, we have:

\begin{align*}
\bar{\mu}_{i,t} - \mu_i - \frac{M_i}{2BK}
&\leq\mathrm{UCB}_t(i) - \mu_i - \frac{M_i}{2BK} \\
&\leq 2\sqrt{\frac{2\log(4mT^3)}{T_{i,t-1}}}  - \frac{M_i}{2BK} \\
&\overset{(a)}{\leq} \kappa _T(M_i, s),
\end{align*}

where (a) follows from: if  $T_{i,t-1} >\ell_{T}(M_i)=\frac{32 B^2 K^2 \log (4mT^3)}{M_i^2}$, then from~(\ref{case1.2.(1)}) we have $\bar{\mu}_{i,t} - \mu_i - {M_i}/{(2BK)}\leq0$, and if $T_{i,t-1} \leq \ell_{T}(M_i)$ we have $2\sqrt{\frac{2\log(4mT^3)}{T_{i,t-1}}}  - \frac{M_i}{2BK} \leq 2\sqrt{\frac{2\log(4mT^3)}{T_{i,t-1}}}$.

Notice that if we set $N_i=0$, then the definition of $\kappa$ in \textbf{case 1} can cover the definition of $\kappa$ in \textbf{case 2}. As a result, we use $ \kappa _{T,N_i,\omega_i}(M_i,s)$ for the following statement for simplicity. From the \textbf{case 1} and \textbf{case 2}, then we can derive the regret into two parts:
\begin{align*}
\operatorname{Reg}(\{S_t\ge M_{S_t}\},\mathcal{N}_t, \neg F_t) 
&= \mathbb{E}\left[\sum_{t=1}^T \Delta_{S_t}\right] \\
&\overset{(a)}{\leq} \mathbb{E} 
\left[\sum_{t=1}^T \mathbb{E}_t \left[  
\sum_{i \in \tau_t} \kappa _{T,N_i,\omega_i}(M_i,T_{i,t-1}) \right] \right] \\
&\overset{(b)}{=} \mathbb{E} 
\left[\sum_{t=1}^T 
\sum_{i \in \tau_t} \kappa _{T,N_i,\omega_i}(M_i,T_{i,t-1})  \right]  \\
&\overset{(c)}{=} \mathbb{E}\left[\sum_{i \in [m]} \sum_{s=0}^{T_{T-1,i}} \kappa _{T,N_i,\omega_i}(M_i,s) \right] \\      
&\leq \sum_{i \in [m]} \sum_{s=0}^{T_{T-1,i}} \kappa _{T,N_i,\omega_i}(M_i,s)  \\  
&= \sum_{t=1}^T \left[ 
\sum_{i \in \tilde{S_t},\omega_i \leq M_i/(2BK)} \kappa _{T,N_i,\omega_i}(M_i,t) +  
\sum_{i \in \tilde{S_t},\omega_i> M_i/(2BK)} \kappa _T(M_i, t) \right] \\
&:= \underline{A} +\underline{B},
\end{align*}

where (a) follows from the discussion on \textbf{case 1} and \textbf{case 2}, (b) follows from the tower rule, (c) follows from that $T_{t-1,i}$ is increased by 1 if and only if the arm $i$ is triggered at round $t$.


We then compute \underline{A}+\underline{B}:

For part $\underline{A}$:
\[
\underline{A} = \sum_{\omega_i \leq M_i / 2BK} \sum_{s=0}^{\ell_{T,N_i,\omega_i}(M_i)}  \kappa _{T,N_i,\omega_i}(M_i,s).
\]

For simplicity we set $N_i'=N_i\cdot \max\{1-\frac{2BK\omega_i}{M_i},0\}^2$. Since when $N_i' \ge \ell_{T,N_i,\omega_i}(M_i) $ $\kappa _{T,N_i,\omega_i}(M_i,s)=0$, we consider the case that $N_i' < \ell_{T,N_i,\omega_i}(M_i) $ :

\begin{align*}
\sum_{s=0}^{\ell_{T,N_i,\omega_i}(M_i)}  \kappa _{T,N_i,\omega_i}(M_i,s) 
&= 4B\sqrt{2\log (4mT^3)}\sum_{s=1}^{\ell_{T,N_i,\omega_i}(M_i)}  \frac{1}{\sqrt{N_i+s}} ds  +4B \\
&\overset{(a)}{\leq}  4B\sqrt{2\log (4mT^3)}\int_{0}^{\ell_{T,N_i,\omega_i}(M_i)}  \frac{1}{\sqrt{N_i+s}} ds  +4B \\
&\overset{(b)}{\leq}   4B\sqrt{2\log (4mT^3)}\int_{0}^{\ell_{T,N_i,\omega_i}(M_i)}  \frac{1}{\sqrt{N_i'+s}} ds +4B\\ 
&=  \frac{64 \sqrt{2} B^2 K \log(4mT^3)}{M_i} 
- 8B \sqrt{2N_i' \log(4mT^3)} +4B ,
\end{align*}

where (a) is by the sum \& integral inequality $\int_{L-1}^U f(x) dx \geq \sum_{i=L}^U f(i) \geq \int_L^{U+1} f(x) dx$ for non-increasing function $f$, (b) follows from $N_i'\leq N_i$ and the monotonicity of integrals.

For part $\underline{B}$, similar as part \underline{A}, we have:
\[
\underline{B} = \sum_{\omega_i > M_i / 2BK} \sum_{s=0} ^{\ell_T(M_i)}\kappa _T(M_i, s) \leq \sum_{\omega_i >  M_i / 2BK} \left( \frac{64 \sqrt{2} B^2 K \log (4mT^3)}{M_i} + 4B \right).
\]    

Sum up \underline{A} + \underline{B}, plus the case that $N_i'$ may be $\ge \ell_{T,N_i,\omega_i}(M_i):=\frac{64B^2K^2\log{4mT^3}}{M_i^2} $, we have

\begin{equation}\label{gap-dependent good event}
\operatorname{Reg}(\{S_t\ge M_{S_t}\},\mathcal{N}_t, \neg F_t)  \leq \sum_{i \in [m]} \max\left\{ \frac{64\sqrt{2} B^2 K \log(4mT^3)}{M_i} - 8B \sqrt{2N_i'\log(4mT^3)},0 \right\}+4Bm.
\end{equation}

For the gap-dependent bound, take $M_i=\Delta_{\min}^i$, then $\operatorname{Reg}(S_t< M_{S_t} )=0$. And following \cite{wang2017cmabt} to handle small probability events $\neg \mathcal{N}_t$ and $F_t$ we have
\begin{equation}\label{gap-dependent final}
    \operatorname{Reg}(T) \leq \sum_{i \in [m]} \max\left\{ \frac{64\sqrt{2} B^2 K \log (4mT^3)}{\Delta_{\min}^i} - 8B \sqrt{2N_i'\log (4mT^3)},0 \right\} +4Bm+\frac{\pi^2}{6}\Delta_{\max},
\end{equation}
where 
\[
N_i' = N_i \cdot \max\left\{ 1 - \frac{2BK \omega_i}{\Delta_{\min}^i}, 0 \right\}^2.
\]
\end{proof}

\section{Proof of Theorem ~\ref{thm:gap-indep}}

To prove Theorem~\ref{thm:gap-indep}, we present two candidate regret bounds, each derived via a distinct analysis technique. We denote these bounds as $\psi$ and $\gamma$, and show that the regret is upper bounded by the minimum of the two.

\subsection{Proof of Bound $\psi$}

\begin{proof}
We further discuss~(\ref{gap-dependent good event}) and~(\ref{gap-dependent final}). For the gap-independent bound, take $M_i=M= \sqrt{64 \sqrt{2} m B^2 K \log (4mT^3)/{T}}$, then $\operatorname{Reg}(S_t< M_{S_t} )\leq TM$. (Naturally the $N_i'$ would change correspondingly.) Then we have
\begin{align*}
\operatorname{Reg}(T) 
&\leq \sum_{i \in [m]} \max\left\{ \frac{64\sqrt{2} B^2 K \log (4mT^3)}{M_i} - 8B \sqrt{2N_i''\log (4mT^3)},0 \right\} + \operatorname{Reg}(S_t< M_{S_t}) \\
&\leq \sum_{i \in [m]} \max\left\{ \frac{64 \sqrt{2} B^2 K \log (4mT^3)}{M_i} - 8B \sqrt{2N_i''\log (4mT^3)},0 \right\}+ TM \\
&\leq 8 \sqrt{2}B \sqrt{\log (4mT^3)} \left( \sum_{i \in [m]} \max \left\{ \sqrt{\frac{KT}{m}} - \sqrt{N_i''}, 0 \right\} + \sqrt{mKT} \right),
\end{align*}

where 
\[
N_i'' = N_i \cdot \max\left\{ 1 - \frac{\omega_i}{4 \sqrt{2}}\sqrt{\frac{KT}{m \log(4mT^3)}}, 0 \right\}^2.
\]    
\end{proof}

\subsection{Proof of Bound $\gamma$}
\paragraph{Intuition.} 
The key idea behind the $\gamma$ bound lies in adopting a different perspective for establishing early stopping conditions. In the gap-dependent analysis, the number of times each arm $i$ needs to be triggered is directly related to its gap. However, when such gap information is unavailable, we must seek alternative ways to characterize how offline data effectively reduces the required online exploration for each arm.

To this end, we observe that the regret incurred by an arm depends on both the amount of offline data $N_i$ and the number of times it is triggered online $T_i$. Motivated by this and \cite{cheung2024minucb}, a formulation based on a linear program is obtained, which captures how much exploration can be saved through leveraging informative offline data, even without explicit gap knowledge.

\begin{proof}
Under the events $\mathcal{N}_t$ and $\neg F_t$, and given filtration $\mathcal{F}_{t-1}$, follow the similar analysis from~(\ref{eq:deta_{S_t}1}) to~(\ref{eq:deta_{S_t}4}) we have:

\begin{align}
\Delta_{S_t} = \mathbb{E}_t[\Delta_{S_t}]
\nonumber&{\leq} \mathbb{E}_t \left[B\sum_{i \in \tilde{S_t}}  p_i^{D,S_t} (\bar \mu_{i,t}-\mu_i) \right] \\ 
\nonumber&{=} \mathbb{E}_t \left[ B \sum_{i \in \tilde{S_t}} \mathbb{I}\{i \in \tau_t\} \left[(\bar \mu_{i,t}-\mu_i) \right] \right] \\
&{=} \mathbb{E}_t \left[ B \sum_{i \in \tau_t}  (\bar \mu_{i,t}-\mu_i)\right] .\label{short sybmbol}
\end{align}

Focusing on the regret analysis on $\mathrm{UCB}^\mathrm{S}$ then we have:

\begin{align*}
\operatorname{Reg}(\{S_t\ge M_{S_t}\},\mathcal{N}_t, \neg F_t) 
&= \mathbb{E}\left[\sum_{t=1}^T \Delta_{S_t}\right] \\
&\overset{(a)}{\leq} \mathbb{E} 
\left[\sum_{t=1}^T \mathbb{E}_t  
\sum_{i \in \tau_t}  B(\bar \mu_{i,t}-\mu_i) \right] \\
&\overset{(b)}{\leq} \mathbb{E} \left[
B\sum_{t=1}^T 
\sum_{i \in \tau_t} (\mathrm{UCB}^\mathrm{S}_t(i)-\mu_i) \right] \\
&{\leq} B\sum_{t=1}^T 
\sum_{i \in \tau_t} (\mathrm{UCB}^\mathrm{S}_t(i)-\mu_i) ,
\end{align*}

where (a) follows from~(\ref{short sybmbol}), and (b) follows from the tower rule and $\bar \mu_{i,t}\leq\mathrm{UCB}^\mathrm{S}_t(i)$. Follow from the lemma~\ref{lemma:good event}, we have:
\begin{align}
\nonumber &B\sum_{t=1}^T \sum_{i \in \tau_t }(\mathrm{UCB}^\mathrm{S}_t(i)-\mu_i) \\
\nonumber&\overset{(a)}{\leq}B\sum_{t=1}^T \sum_{i \in \tau_t }
(2\sqrt{\frac{2\log (4mt^3)}{N_i+T_{i,t-1}}}+\frac{N_i}{N_i+T_{i,t-1}} \omega_i) \\
\nonumber&\overset{(b)}{\leq} 2B \sum_{t=1}^T \sum_{i \in \tau_t }
\sqrt{\frac{2\log (4mt^3)}{N_i+T_{i,t-1}} }+BKT\omega_{\max} \\ 
\nonumber&\overset{(c)}{=} 2B\sum_{i \in [m]} \sum_{n(i)=0}^{C_T(i)}
\sqrt{\frac{2\log (4mt^3)}{N_i+n(i)}}+BKT\omega_{\max}  \\
&\overset{(d)}{\leq} 2B\sum_{i \in [m]} \sum_{n(i)=1}^{C_T(i)}
\sqrt{\frac{2\log (4mt^3)}{N_i+n(i)}}+BKT\omega_{\max} +2Bm, \label{(1)}
\end{align}

Where (a) is from lemma~\ref{lemma:good event}, (b) follows from $N_i+T_{i,t-1} \ge N_i$ and the definition of $K$, $C_T(i)$ in (c) is an undetermined coefficient discuss next, and (d) considers the case that $n(i)+N_i$ may equal to zero, which in that case we treat the log-term as one as algorithm~\ref{alg:hybrid cucb} designed in line~\ref{alg:line:radius:hybird}.

And we use linear program to consider the $C_T(i)$:
\begin{align}
\sum_{i \in [m]} \sum_{n(i)=1}^{C_T(i)}
\frac{1}{\sqrt{N_i+n(i)}}
\nonumber &\overset{(a)}{\leq} \sum_{i \in [m]} \sum_{n(i)=1}^{C_T^*(i)}
\frac{1}{\sqrt{N_i+n(i)}}  \\
\nonumber &\overset{(b)}{\leq} \sum_{i \in [m]} \sum_{n(i)=1}^{\max \{\left\lceil \tau_* \right\rceil -N_i,0\}} \frac{1}{\sqrt{N_i+n(i)}}  \\
\nonumber &\leq \sum_{i \in [m]} \frac{ \max \{\left\lceil \tau_* \right\rceil -N_i,0\}}{\left\lceil \tau_* \right\rceil} \sum_{t=1}^{\left\lceil \tau_* \right\rceil}\frac{1}{\sqrt{t}} \\
\nonumber &\leq  \sum_{i \in [m]} { \max \{\left\lceil \tau_* \right\rceil -N_i,0\}}\cdot\frac{4}{\sqrt{\tau_*}} \\
\nonumber &\leq  \sum_{i \in [m]} {( \max \{ \tau_* -N_i,0\}}+1)\cdot\frac{4}{\sqrt{\tau_*}} \\
\nonumber &\overset{(c)}{\leq} \sum_{i \in [m]} {( n_*(i)}+1)\cdot\frac{4}{\sqrt{\tau_*}} \\
&\leq \frac{8KT}{\sqrt{\tau_*}} , \label{(2)}
\end{align}

where (a) comes from the definition of (LP), (b) from lemma~\ref{fact1}, (c) follows from the feasibility of $(\tau_*,\{n_*(i)\}_{i\in[m]})$ to (LP).

Combine (\ref{(1)}) and (\ref{(2)}), and following \cite{wang2017cmabt} to handle small probability events $\neg \mathcal{N}_t$ and $F_t$ then we the final regret bound:

\begin{align*}
\operatorname{Reg}(T) \leq 16BT\sqrt{\frac{2 \log (4mT^3)}{\tau_*}}+BKT\omega_{\max}+4Bm+\frac{\pi^2}{6}\Delta_{\max} .
\end{align*}

\end{proof}

\section{Experimental Details and Real-World Validation}
 


We compare our proposed hybrid CUCB with existing CUCB for the pure online setting \citep{wang2017cmabt} and CLCB for the pure offline setting \citep{liu2025offlineCMAB}. 
We mainly focus on the task of online learning to rank for the considered CMAB-T problem, where the agent selects $k$ from $m$ base arms. The outcome distribution of each base arm is Bernoulli. We set $m=10$ and $k=5$. All results are averaged over $20$ runs, and the error bar is defined as the standard deviation divided by $\sqrt{20}$. 
The triggering process and reward function are introduced as below following existing literature \citep{chen2016combinatorial,liu2025offlineCMAB}: 
\begin{itemize}
    \item Triggering process: The super arm $S_t$ is a permutation over $k$ arms. The environment would check the Bernoulli outcome from the first to the last one. If the first arm has outcome $1$, then the triggering stops. Otherwise, the environment would check the second arm. Similar process continues until one arm has the outcome $1$. All arms ranked before this arm are observed with outcome $0$ and this arm is observed with outcome $1$. The following arms have no observations. 
    \item Reward function: The reward function is defined as:
\begin{equation*}
    r(S_t,\mu)=1-\prod_{i\in S_t}(1-\mu_i). 
\end{equation*}
\end{itemize}



We evaluate the performances of algorithms in both unbiased ($V=0$) and biased ($V\ne 0$) environments. 
For the unbiased case, we generate $\mu_i^{\text{on}}$ uniformly in the interval $(0, 0.5)$ and set $\mu_i^{\text{off}} = \mu_i^{\text{on}}$. For the biased setting, we test different values of discrepancy $V\in \{0.2,0.3,0.4\}$. To ensure both $\mu_i^{\text{on}}$ and $\mu_i^{\text{off}}$ fall into interval $[0,1]$ when evaluating different values of $V_i$, we generate $\mu_i^{\text{on}}$ uniformly in the interval $(0.4, 0.5)$ and uniformly choose $V_i=V$ or $V_i=-V$. We set $\mu_i^{\text{off}}=\mu_i^{\text{on}}+V_i$. 


Finally, we validate the performance of our hybrid CUCB algorithm on a real-world dataset. Specifically, we use the MovieLens dataset, where we randomly 
select 10 movies as the arms and split the data into two disjoint parts to represent 
online and offline feedback. The bias level $V$ is computed as the mean difference 
between the two parts. As shown in Figure~\ref{fig:exp_bias}, our algorithm 
consistently outperforms or matches the baselines across different offline dataset 
sizes. Notably, hybrid CUCB achieves significantly lower regret compared to CLCB, 
while maintaining performance comparable to CUCB even under distributional shift. 
These results highlight the robustness of hybrid CUCB in practical settings, 
demonstrating that the algorithm can effectively leverage real-world offline data 
despite inherent biases and variability.

\begin{figure}[h]
\centering
\includegraphics[width=0.32\textwidth]{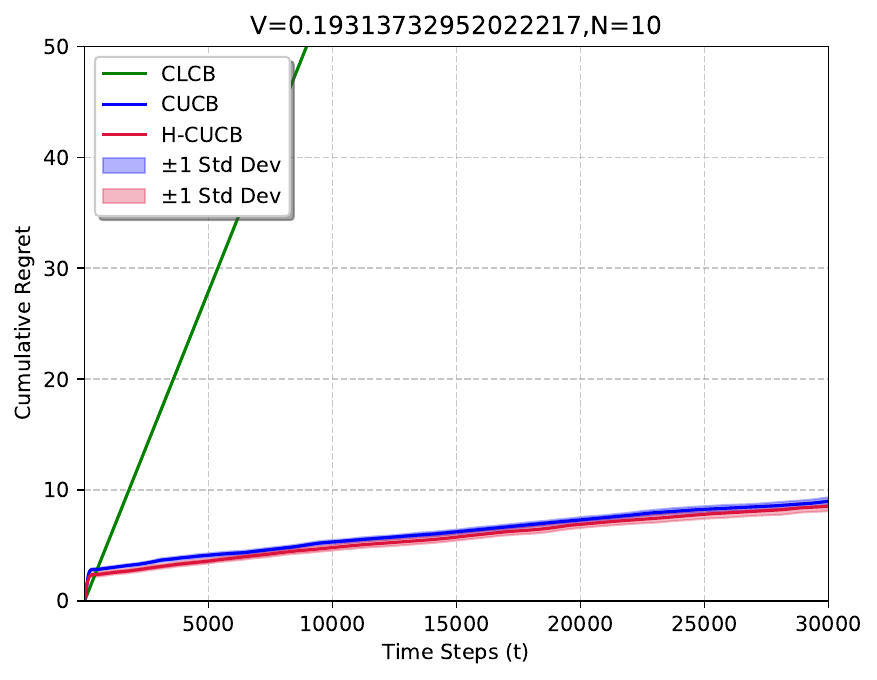}
   \includegraphics[width=0.32\textwidth]{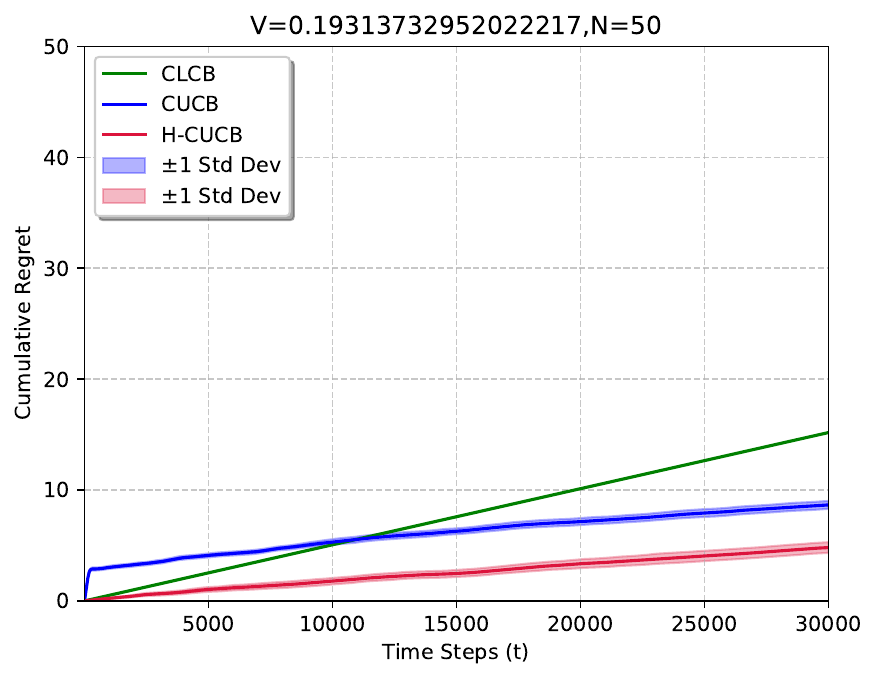}
   \includegraphics[width=0.32\textwidth]{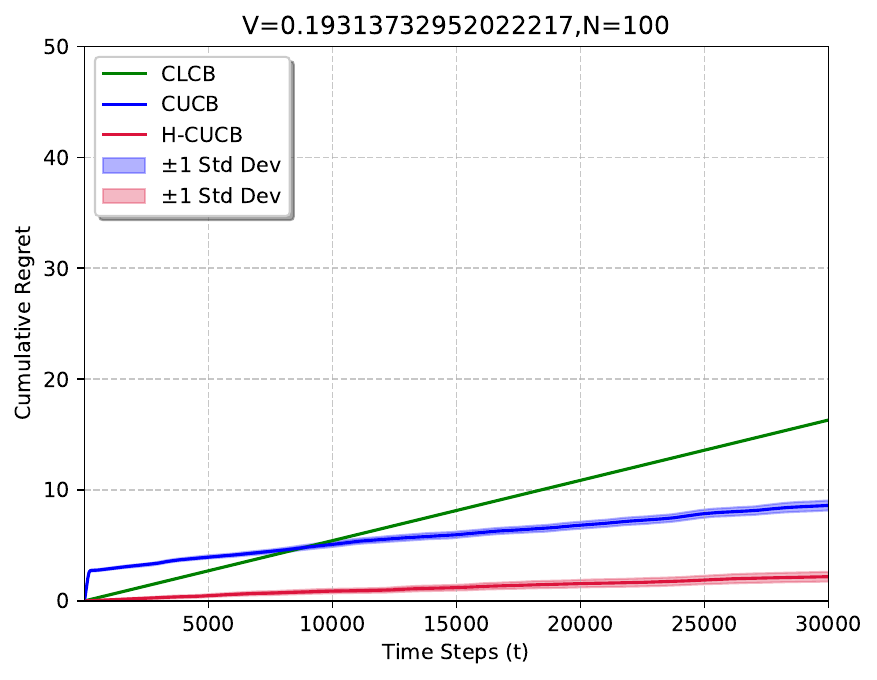}
    \caption{Performance comparison of hybrid CUCB against baselines in a real-world dataset. }
    \label{fig:exp_bias}
\end{figure}

\section{Lower bound of hybrid CMAB-T}
\subsection*{Step 1. Problem Setup.}

We consider a CMAB-T instance with $m$ base arms.  
An action (super-arm) is a subset $S \subseteq [m]$ with fixed size $|S| = K$.

\medskip
\noindent\textbf{Triggering distribution.}
When action $S$ is played at round $t$, exactly one arm in $S$ is triggered:
\[
p_{S,i} = \Pr(i \in \tau_t \mid S_t = S) = 
\begin{cases}
1/K, & i \in S, \\
0,   & i \notin S.
\end{cases}
\]
Let $T_i$ denote the number of rounds in which arm $i$ is triggered.

\medskip
\noindent\textbf{Reward model.}
When arm $i$ is triggered, its outcome is $X_i^{(t)} \sim \mathrm{Bern}(\mu_i)$.
The reward is defined as
\[
R(S, X, \tau) = B\cdot \sum_{i\in\tau} X_i
            = B\cdot X_{j^\ast},
\]
where $j^\ast$ is the unique triggered arm.

Thus the expected reward of action $S$ under mean vector $\mu$ is
\[
r_S(\mu)
    = \mathbb{E}[R(S,X,\tau)]
    = B \sum_{j \in S} p_{S,j} \mu_j
    = \frac{B}{K} \sum_{j\in S} \mu_j.
\]

\medskip
\noindent\textbf{Verification of TPM smoothness.}
For any two mean vectors $\mu,\mu'$,
\[
\begin{aligned}
|r_S(\mu) - r_S(\mu')|
    &= \left| \frac{B}{K}\sum_{j\in S}(\mu_j - \mu_j') \right| \\
    &\le \frac{B}{K} \sum_{j\in S} |\mu_j - \mu_j'| 
     = B \sum_{j} p_{S,j} |\mu_j - \mu_j'|.
\end{aligned}
\]
Therefore this instance satisfies the TPM condition with constant $B$.

\medskip
\noindent\textbf{Gaps.}
Assume the optimal arm is arm $1$, and its mean dominates:
\[
\mu_1 = \mu_2 = \cdots = \mu_K \ge \mu_{K+1} \ge \cdots \ge \mu_m.
\]
The optimal action is $S^{\*} = \{1,\dots,K\}$ with reward
\[
r_{S^{\*}}(\mu) = \frac{B}{K} \cdot K \mu_1 = B\mu_1.
\]

For any suboptimal arm $i > K$, define an action
\[
S_i = \{1,1,\dots,1,i\} \qquad (K-1 \text{ copies of arm } 1,\text{ and } i).
\]

(Since $\mu_1 = \mu_2 = \cdots = \mu_K$)

Its expected reward is
\[
r_{S_i}(\mu)
= \frac{B}{K}\left((K-1)\mu_1 + \mu_i\right).
\]

Thus the action gap is
\[
\Delta_{S_i}
= r_{S^{\*}}(\mu) - r_{S_i}(\mu)
= \frac{B}{K}\,(\mu_1 - \mu_i)
= \frac{B}{K}\Delta_i,
\]
where we define the base-arm gap $\Delta_i := \mu_1 - \mu_i$.

\medskip
\noindent\textbf{Regret decomposition.}
Let $N_i^{\text{act}}$ be the number of times the algorithm selects action $S_i$.
Then the regret contributed by arm $i$ is
\[
\mathrm{Reg}_i(T)
= \Delta_{S_i} \cdot \mathbb{E}_\nu[N_i^{\text{act}}]
= \frac{B}{K}\Delta_i\,\mathbb{E}_\nu[N_i^{\text{act}}].
\]

Because arm $i$ is triggered with probability $1/K$ under $S_i$, we have
\[
\mathbb{E}_\nu[T_i]
= \frac{1}{K}\, \mathbb{E}_\nu[N_i^{\text{act}}]
\quad\Leftrightarrow\quad
\mathbb{E}_\nu[N_i^{\text{act}}]
= K\cdot \mathbb{E}_\nu[T_i].
\]

Thus,
\[
\boxed{
\mathrm{Reg}_i(T)
= B\Delta_i \cdot \mathbb{E}_\nu[T_i].
}
\]

This establishes the link between regret and the required online triggering counts.

\subsection*{Step 2. Construct Two Instances.}

For arm $i$, we consider two environments $\nu$ and $\nu^{(i)}$:

\[
\text{online:}\quad 
\mu_i^{\mathrm{on}} \;\mapsto\; \mu_i^{\mathrm{on}} + \Delta,
\qquad
\text{and for all } j\neq i,\;\;
\mu_j^{\mathrm{on}} \text{ unchanged}.
\]

\[
\text{offline:}\quad
\begin{cases}
\Delta \le 2V_i: & \text{the offline means can fully align},\\[4pt]
\Delta > 2V_i: & \text{match as closely as possible, but cannot fully align}.
\end{cases}
\]

Since,
\[
\mu_i^{\mathrm{off},(i)} \ge \mu_i^{\mathrm{on}} + \Delta - V_i,
\qquad
\mu_i^{\mathrm{off}} \le \mu_i^{\mathrm{on}} + V_i.
\]

Hence the difference cannot be smaller than
\[
\mu_i^{\mathrm{off},(i)} - \mu_i^{\mathrm{off}}
\;=\;
\Delta - 2V_i,
\quad\text{i.e.,}\quad
\mu_i^{\mathrm{off}} = \mu_i^{\mathrm{on}} + V_i,
\qquad
\mu_i^{\mathrm{off},(i)} = \mu_i^{\mathrm{on}} + \Delta - V_i.
\]

Trigger-$i$ outcomes are Bernoulli$(\mu_i)$ in both online and offline data
(with different means under the two environments).

\medskip
\noindent\textbf{KL terms.}

\[
\mathrm{KL}\bigl(P_{\nu}^{\mathrm{on}} \,\Vert\, P_{\nu'}^{\mathrm{on}} \bigr)
= 
\mathbb{E}_{\nu}[T_i]\cdot 
\mathrm{KL}\!\left(
   \mathrm{Bern}(\mu_i^{\mathrm{on}})\,\Vert\,
   \mathrm{Bern}(\mu_i^{\mathrm{on}} + \Delta)
\right).
\]

\[
\mathrm{KL}\bigl(P_{\nu}^{\mathrm{off}} \,\Vert\, P_{\nu'}^{\mathrm{off}} \bigr)
=
N_i\cdot 
\mathrm{KL}\!\left(
   \mathrm{Bern}(\mu_i^{\mathrm{off}})\,\Vert\,
   \mathrm{Bern}(\mu_i^{\mathrm{off},(i)})
\right).
\]

By standard bandit lower bounds or the Bretagnolle--Huber inequality, we have
\[
\mathbb{E}_{\nu}[T_i]\cdot
\mathrm{KL}\!\left(
\mathrm{Bern}(\mu_i^{\mathrm{on}})\,\Vert\,
\mathrm{Bern}(\mu_i^{\mathrm{on}}+\Delta)
\right)
\;+\;
N_i\cdot 
\mathrm{KL}\!\left(
\mathrm{Bern}(\mu_i^{\mathrm{off}})\,\Vert\,
\mathrm{Bern}(\mu_i^{\mathrm{off},(i)})
\right)
\;\gtrsim\;
\log T.
\]

Since
\[
\mathrm{KL}\!\left(\mathrm{Bern}(\mu)\,\Vert\,\mathrm{Bern}(\mu+\Delta)\right)
= \Theta(\Delta^{2})
\quad
(\text{from~}
\Delta^{2}/2\mu(1-\mu)
\;\le\;
\mathrm{KL}
\;\le\;
\Delta^{2}/\mu(1-\mu)),
\]
we obtain
\[
\mathbb{E}_{\nu}[T_i]\cdot \Delta^{2}
\;+\;
N_i\cdot (\Delta - 2V_i)^{2}
\;\gtrsim\;
\log T.
\]

Therefore,
\[
\mathbb{E}_{\nu}[T_i]
\;\gtrsim\;
\frac{\log T}{\Delta^{2}}
\;-\;
N_i\cdot \left(\frac{\Delta - 2V_i}{\Delta}\right)^{2}
:=
\frac{\log T}{\Delta^{2}}
-
N_i^{\prime\prime},
\]
where
\[
N_i''=N_i\min\{1-\frac{2V_i}{\Delta_i},0\}^2 .
\]

\subsection*{Step3. Summary}

Finally, we have:

\begin{align*}
\mathrm{Reg}_i(T)
&\ge B \cdot \Delta_i \cdot \mathbb{E}_{\nu}[T_i] \\
&\ge B\left( 
        \frac{\log T}{\Delta_i}
        - N_i^{\prime\prime} \cdot \Delta_i
     \right)
     \qquad (\text{Setting} ~\Delta = \Delta_i) \\
&\ge
   B\left(
      \frac{\log T}{\Delta_i}
      - \sqrt{N_i^{\prime\prime} \cdot \log T}
    \right),
     \qquad 
     \left(
        N_i^{\prime\prime} < \frac{\log T}{\Delta_i^2}
        \;\Rightarrow\;
        \Delta_i = \Delta 
        < \sqrt{\frac{\log T}{N_i^{\prime\prime}}}
     \right).
\end{align*}

Notice that $\Delta_i=gap= \frac{B}{K} \Delta_{\min}^i$, and $\omega_i=2V_i$, then we have:

\begin{align*}
\mathrm{Reg}(T)
= \sum_{i \in [m]} \mathrm{Reg}_i(T) 
&\ge
   B\sum_{i \in[m]}\left(
      \frac{\log T}{\Delta_i}
      - \sqrt{N_i^{\prime\prime} \cdot \log T}
    \right) \\
&\ge
   B\sum_{i \in[m]}\left(
      \frac{K\log T}{B\Delta^i_{\min}}
      - \sqrt{N_i^{\prime\prime} \cdot \log T}
    \right), \text{where} ~ N_i''=N_i\min\{1-\frac{B\omega_i}{K\Delta^i_{\min}},0\}^2.
\end{align*}


\end{document}